\newcommand{\putawayall}[1]{}
\newcommand{  \logic}{\ensuremath{\textsf{LDA} }\xspace}
\newcommand{  \logicepi}{\ensuremath{\textsf{LEA} }\xspace}
\newcommand{\putaway}[1]{}
\newcommand{\ijcaiputaway}[1]{}
\newcommand{\classbelbase}{\mathbf{MAB} }
\newcommand{\consclassbelbase}{\mathbf{CMAB} }
\newcommand{\expbel}[1] {\mathtt{E}_{#1}   }
\newcommand{\impbel}[1] {\mathtt{I}_{#1}  }
\newcommand{\impbelposs}[1] {\widehat{\mathtt{I}}_{#1}  }
 \newcommand{\notmod}{\mathbf{NDM}}
 \newcommand{\quasinotmod}{\mathbf{QNDM}}
\newcommand{\valfunct}{\mathcal{V}}
 \newcommand{\awbase}{\mathcal{A}}
\newcommand{\belbase}{\mathcal{D}}
\newcommand{\belbaseset}{\mathit{B}}
\newcommand{\iconstraint}{\mathit{Cxt}}
\newcommand{\state}{\mathit{V}}
\newcommand{\relstate}[1]{\mathcal{R}_{#1}}
\newcommand{\doxset}{\mathcal{N}}
\newcommand{\suchthat}{:}
\newcommand{\tagLabel}[2]{\tag{\textbf{#1}}\label{#2}}
\renewcommand{\phi}{\varphi}
\newcommand{\et}{\wedge}
\newcommand{\imp}{\rightarrow} % implique ->
\newcommand{\eqv}{\leftrightarrow} % equivalent <->
\renewcommand{\phi}{\varphi}
\newcommand{\eqdef}{\ \stackrel{\mathtt{def}}{=} \ }     %definitional equation
\newcommand{\ATM}{\mathit{Atm}}
\newcommand{\PROP}{\mathit{Atm}}
\newcommand{\AGT}{\mathit{Agt}}
\newcommand{\bnf}{::=}
\newcommand{\lang}{ \mathcal{L} }
\newcommand{\langminus}{ \mathcal{L}^{0} }
\newtheorem{theorem}{Theorem}
\newtheorem{proposition}{Proposition}
\newtheorem{example}{Example}
\newtheorem{lemma}{Lemma}
\newtheorem{definition}{Definition}
\newenvironment{proof}{\medskip\noindent \textsc{Proof.}}
{\hspace*{\fill}\nolinebreak[2]\hspace*{\fill}$\blacksquare$\medskip}
\newenvironment{proofsketch}{\medskip\noindent \textsc{Sketch of Proof.}}
{\hspace*{\fill}\nolinebreak[2]\hspace*{\fill}$\blacksquare$\medskip}
\newbox\itembox
\def\itemlistlabel#1{#1\hfill}
\def\itemlist#1{\setbox\itembox=\hbox{#1}%
                \list{}{\labelwidth\wd\itembox
                             \leftmargin\labelwidth
                             \advance\leftmargin by\itemindent
                             \advance\leftmargin by\labelsep
                             \let\makelabel\itemlistlabel}}
\begin{document}

\title{Rethinking Epistemic Logic with Belief Bases}

\author{Emiliano Lorini\\
CNRS-IRIT, Toulouse University, France
}

\date{}

 \maketitle

\begin{abstract}
We introduce
a new semantics
for a logic
of explicit and implicit beliefs
based on the concept
of multi-agent belief base.
Differently from
existing Kripke-style semantics
for epistemic logic
in which
the notions
of possible world
and doxastic/epistemic alternative
are primitive,
in our semantics
they are non-primitive
but are defined from
the concept of
belief base.
We provide a complete axiomatization
and prove  decidability 
for our logic
via a finite model argument. 
We also provide a polynomial embedding
of our logic into Fagin \& Halpern's logic
of general awareness and establish
a complexity result
for our logic via the embedding.
%and study a variant in which
%the concept
%of belief
%is replaced
%by a concept
%of truthful knowledge.

\end{abstract}

\section{Introduction}

Epistemic logic and, more generally,
formal epistemology are the areas at the intersection
between philosophy \cite{Hintikka}, artificial intelligence (AI) \cite{Fagin1995,Meyer1995} and economics \cite{MonginLismont}
devoted to the formal representation
of epistemic attitudes of agents including belief and knowledge.
An important distinction in epistemic logic is between
\emph{explicit belief} and \emph{implicit belief}.
According to
 \cite{LevesqueExplicitBel},
  ``...a sentence is explicitly believed when it is actively held to be true by an agent and implicitly believed when it follows from what is believed'' (p. \!198).
  This distinction is particularly relevant for the design of  resource-bounded
  agents who spend  time
to make inferences
and do not believe all facts that are deducible from their actual beliefs.

The concept
of explicit  belief
 is tightly
connected with
the concept of  \emph{belief base} \cite{NebelBR,Makinson,HanssonJSL,RottBB}. In particular,
an agent's belief
 base, which is not necessarily closed under
  deduction,  includes all facts
 that are explicitly believed by the agent.
 Nonetheless,
 existing logical formalizations
 of explicit and implicit beliefs  \cite{LevesqueExplicitBel,Fag87}
 do not clearly account for this connection.

 The aim of this paper is
 to fill this gap by
 providing a multi-agent logic that
 precisely articulates the distinction between  explicit
belief, as a fact in an agent's belief base, and implicit belief,
as a fact that is deducible
from the agent's explicit  beliefs,
given the agents' common ground.
 The concept of  \emph{common ground}
  \cite{StalnakerCommonGround}
corresponds to the body of information
that the
agents commonly believe to be the case
and that has to be in the deductive
closure of their belief bases.
 The multi-agent aspect
 of the logic
 lies in the fact that it supports
reasoning about agents' high-order beliefs,
 i.e., an agent's explicit (or implicit) belief
 about the explicit (or implicit) belief of another agent.

 Differently from
existing Kripke-style semantics
for epistemic logic
in which
the notions
of possible world
and doxastic/epistemic alternative
are primitive,
in the semantics of our logic
the notion
of doxastic alternative is defined from --- and more generally
grounded on ---
the concept of
belief base.

We believe that an explicit representation
of agents' belief bases is crucial
in order to facilitate
the task of designing intelligent systems
such as robotic agents or conversational agents.
The problem
of extensional semantics for epistemic logic,
whose most representative example
is the Kripkean semantics,
is their being too abstract
and too far from  the agent specification.
More generally, the main limitation of the Kripkean semantics is that it does not say from where doxastic alternatives come from
thereby being ungrounded.\footnote{The need for a grounded semantics for doxastic/epistemic logics has been pointed out by other authors
including \cite{LomuscioRaimondi2015}.}

\label{complexity}

The paper is organized as follows.
In Section \ref{sec:logic}, we present
 the language of our logic of explicit
and implicit beliefs.
Then, in Section \ref{semanticsGen}, we introduce
 a semantics for this language based
on the notion of multi-agent belief base.
We also consider two additional Kripke-style
semantics
  in which the notion
of doxastic alternative is primitive.
These additional semantics
will be useful for proving
completeness and decidability of our logic.
In Section \ref{finitemodelproperty},
we  show that the three semantics
are all equivalent with respect
to the formal language under consideration.
Then, in Section \ref{Axiomatics}, we  provide an
axiomatization for our logic of explicit and implicit belief
and prove that its satisfiability problem
is decidable.
%We conclude by presenting an epistemic
%variant of our logic,
%in which the concept of belief
%is replaced by a
%concept
%of truthful knowledge.
In Section \ref{complexity},
we provide a polynomial embedding
of our logic into the logic of general awareness by  Fagin \& Halpern \cite{Fag87},
a well-known logic in AI and epistemic game theory.
Thanks to this embedding, we will be able to conclude that
the satisfiability problem of our logic
is PSPACE-complete.
After having discussed related work in Section \ref{relwork}, we conclude.

\section{A Language for Explicit and Implicit Beliefs }\label{sec:logic}

$\logic$ (Logic of Doxastic Attitudes) is a logic
for reasoning about
explicit  beliefs
and implicit beliefs
of multiple agents.
%As we have emphasized in the introduction,
%the concept of awareness
%is open to a number of different interpretations.
%The interpretation we adopt here is different from F\&H's original one.
%Specifically, following \cite{Kah82}
%and our previous work
%on the cognitive theory of surprise  \cite{EmilianoLorini06},
%with the term ``awareness'' we refer to
%all facts
%that are active in an agent's
%mind
%and under the focus of her attention. Being aware of a certain fact is
%synonymous
%of
%imagining it,
%thinking about it,
%holding a mental representation of this fact.
%Clearly,  there are facts of which an agent is aware but that are not explicitly believed
%by the agent. This is represented by the grey area in  Figure \ref{fig:workplan}.
%For example, an agent can imagine herself to be
%on a sunny beach
%without believing this fact. Another agent
%may think about  the scenario of
%a candidate winning
%the presidential election of a given country, being confident and believing that she will not win.
%As Figure \ref{fig:workplan} highlights, an agent's set of explicit  beliefs
%is included in the set of facts of which the agent is aware. In other words,
%an agent necessarily  thinks about what she explicitly believes.
Assume
a countably  infinite set  of atomic propositions $\PROP = \{p,q, \ldots \}$
and
 a finite set of agents $\AGT = \{ 1, \ldots, n \}$.

We define the language of the logic $\logic$
in two steps.
We first define the language $\langminus_\logic(\PROP)$
by the following grammar in Backus-Naur Form (BNF):
\begin{center}\begin{tabular}{lcl}
  $\alpha$  & $\bnf$ & $p  \mid \neg\alpha \mid \alpha_1 \wedge \alpha_2  \mid
  \expbel{i} \alpha
                        $
\end{tabular}\end{center}
where $p$ ranges over $\PROP$
and $i$ ranges over $\AGT$.
 $\langminus_\logic(\PROP)$
is the language for representing
explicit beliefs of multiple agents.
The formula $\expbel{i} \alpha$ is read ``agent  $i$ explicitly  (or actually)
believes that $\alpha$ is true''.
In this language,
we can represent high-order explicit beliefs,
i.e., an agent's explicit belief about
another agent's explicit beliefs.

The language $\lang_\logic(\PROP)$,
extends the language $\langminus_\logic(\PROP)$
by modal operators of implicit belief and
 is defined by the following grammar:
\begin{center}\begin{tabular}{lcl}
  $\phi$  & $\bnf$ & $\alpha  \mid \neg\phi \mid \phi_1 \wedge \phi_2  \mid  \impbel{i} \varphi
                        $\
\end{tabular}\end{center}
where $\alpha$ ranges over $\langminus_\logic(\PROP)$.
%and $i $ ranges over $\Agt$.
For notational convenience we write
$\langminus_\logic$  instead
of $\langminus_\logic(\PROP)$
and  $\lang_\logic$  instead
of $\lang_\logic(\PROP)$, when the context is unambiguous.

The other Boolean constructions  $\top$, $\bot$, $\vee$, $\imp$ and $\eqv$ are defined from $\alpha$, $\neg$ and $\et$ in the standard way.

For every formula $\varphi \in \lang_\logic$,
we write $\ATM(\varphi) $
to denote the set of atomic propositions of type $p$ occurring in $\varphi$.
%and $\mathit{EB}(\varphi)$ to denote the set of
%explicit belief
% formulas of type $\expbel{i} \alpha$ occurring in $\varphi$.
Moreover,
for every set of formulas $X \subseteq \lang$,
we define
$\ATM(X)  = \bigcup_{\varphi \in X}  \ATM(\varphi)  $.
%and
%$\mathit{EB}(X)  = \bigcup_{\varphi \in X}  \mathit{EB}(\varphi)  $.

The formula $ \impbel{i}   \varphi$
has to be read
``agent  $i$ implicitly (or potentially)  believes that $\varphi$ is true''.
We define the dual operator $\impbelposs{i}$
as follows:
\begin{align*}
\impbelposs{i}  \varphi \eqdef  \neg \impbel{i} \neg \varphi.
\end{align*}
$ \impbel{i}   \varphi$
has to be read
``$\varphi$ is compatible with agent $i$'s implicit beliefs''.

\section{Formal Semantics }\label{semanticsGen}

In this section,
we present three formal semantics
for the language
of explicit and implicit beliefs
defined above.
In the first semantics,
the notion of
  doxastic alternative
is not primitive but
it is defined from the primitive
concept
of belief base.
The second semantics is a Kripke-style
semantics, based on the concept
of notional doxastic model, in which
an agent's set of doxastic alternatives
coincides with the set of possible worlds
in which the agent's explicit beliefs are true.
The third semantics
is a weaker semantics, based on the concept of
\emph{quasi}-notional doxastic model.
It only requires  that
an agent's set of doxastic alternatives
has to be included in the set of possible worlds
in which the agent's explicit beliefs are true.
At a later stage in the paper,
we will show that three semantics
are equivalent with respect
to the formal language under consideration.

\subsection{Multi-agent belief base semantics  }\label{semanticSect}

We first consider
the semantics
based on the concept
of multi-agent belief base
that is defined as follows.

\begin{definition}[Multi-agent belief base]\label{MAB}
A multi-agent belief base is a tuple $ B = (\belbaseset_1, \ldots, \belbaseset_n,  \state )$
where:
\begin{itemize}
%\item $\constraint \in \langminus $ is an integrity constraint,
\item for every $i  \in \AGT$, $\belbaseset_i \subseteq \langminus_\logic  $
is agent $i$'s belief base,
\item $ \state \subseteq \PROP  $ is the actual state.

\end{itemize}
%The class of MABs is denoted by $\classbelbase$.
\end{definition}

A similar concept
is used in  belief merging \cite{KoniecznyPerez}
in which each agent
is identified with her belief
base.
Our concept of multi-agent belief base
also includes the concept
of actual state, as
the set of true atomics facts.

The sublanguage
$\langminus_\logic(\PROP)$
is interpreted with respect
to multi-agent belief bases, as follows.
\begin{definition}[Satisfaction relation]\label{truthcond1}
Let $ B =\\ (\belbaseset_1, \ldots,  \belbaseset_n,  \state )$ be a multi-agent belief base. Then:
\begin{eqnarray*}
B \models p & \Longleftrightarrow & p \in \state \\
B\models \neg \alpha & \Longleftrightarrow &    B \not \models  \alpha \\
B \models \alpha_1 \wedge \alpha_2 & \Longleftrightarrow &    B  \models \alpha_1  \text{ and }     B \models \alpha_2 \\
B \models \expbel{i} \alpha   & \Longleftrightarrow & \alpha \in  \belbaseset_i
\end{eqnarray*}

\end{definition}

The following definition introduces the
concept
of doxastic alternative.

\begin{definition}[Doxastic alternatives]
Let $ B =(\belbaseset_1, \ldots, \\ \belbaseset_n, \state )$ and $ B' = (\belbaseset_1', \ldots,  \belbaseset_n',  \state' )$ be two multi-agent belief bases.
Then,  $B \relstate{i} B'$ if and only if, for every $\alpha \in \belbaseset_i $, $B' \models \alpha$.

\end{definition}
$B \relstate{i} B'$ means
that
$B'$
is a doxastic alternative for agent $i$ at $B$
(i.e., at $B$
agent $i$ considers $B'$ possible).
 The idea of the previous definition is that
$B'$
is a doxastic alternative for agent $i$ at $B$
if and only if,
$B'$ satisfies all
facts that agent $i$ explicitly believes
at $B$.

A multi-agent belief model (MAB)
is defined to be a
multi-agent belief base
supplemented with
a
set of multi-agent belief bases,
called \emph{context}.
The latter
includes
 all multi-agent belief bases that are compatible with
the agents' common ground \cite{StalnakerCommonGround},
i.e., the body of information that the agents commonly believe to be the case.
\begin{definition}[Multi-agent belief model]\label{MAM}
A multi-agent belief model (MAB)
is a pair $ (B,\iconstraint)$,
where
 $B $
is a multi-agent belief base
and
$\iconstraint$
is a set of multi-agent belief bases.
The class of  MABs is denoted by $\classbelbase$.
\end{definition}
Note that in the previous
definition
we do not require $ B \in \iconstraint$.
Let us illustrate the concept of MAB with the aid of an example.
\begin{example}
Let $\AGT = \{ 1, 2 \}$
and $\{p,q  \} \subseteq \PROP  $.
Moreover, let $    (\belbaseset_1,   \belbaseset_2, \state )                    $ be such that:
\begin{align*}
\belbaseset_1 & = \{p,  \expbel{2} p \},\\
\belbaseset_2 & = \{p \},\\
\state & =  \{p, q\}.
\end{align*}
Suppose that
the agents have in their common ground the fact $p \rightarrow q $. In other words, they
commonly believe that $p$ implies $q$.
This means that:
\begin{align*}
\iconstraint = \{       B'     \suchthat   B' \models      p \rightarrow q      \}.
\end{align*}
\end{example}

The following definition
generalizes
Definition
\ref{truthcond1}
to the full language $\lang_\logic(\PROP)$.
Its formulas
are interpreted with
respect
to MABs.
(Boolean cases are omitted, as they are defined in the usual way.)
\begin{definition}[Satisfaction relation (cont.)]\label{truthcond2}
Let $ (B,\iconstraint)  \in \classbelbase$. Then:
\begin{eqnarray*}
 (B,\iconstraint) \models \alpha & \Longleftrightarrow & B \models \alpha \\
 (B,\iconstraint) \models \impbel{i} \varphi & \Longleftrightarrow & \forall B' \in  \iconstraint  : \text{if } B \relstate{i} B' \text{ then}\\
 && ( B' , \iconstraint) \models \varphi
\end{eqnarray*}

\end{definition}

Let us go back to the example.

\begin{example}
It is to check that the following holds:
\begin{align*}
(B,\iconstraint )\models  \impbel{1} (p \wedge q) \wedge  \impbel{2} (p \wedge q)  \wedge \impbel{1}  \impbel{2} (p \wedge q).
\end{align*}
Indeed, we have:
\begin{align*}
\relstate{1} (B) \cap \iconstraint 		&= \{       B'     \suchthat   B' \models  p \wedge  \expbel{2} p   \wedge (p \rightarrow q)    \},\\
\relstate{2} (B) \cap \iconstraint &= \{       B'     \suchthat   B' \models  p   \wedge (p \rightarrow q)    \},
\end{align*}
and, consequently,
\begin{align*}
\big(\relstate{1} \circ \relstate{2} (B) \big) \cap \iconstraint = \{       B'     \suchthat   B' \models  p    \wedge (p \rightarrow q)    \},
\end{align*}
where $\circ$
is the composition operation between binary relations
and
$\relstate{i} (B) = \{B' \suchthat B \relstate{i} B'\}$.
\end{example}

Here, we consider consistent MABs
that guarantee consistency of the agents' belief bases.
Specifically:
\begin{definition}[Consistent MAB]\label{CMAM}
 $ (B,\iconstraint)$ is a consistent MAB (CMAB)
if and only if, for every $B ' \in \iconstraint \cup \{B\}$,
there exists $B'' \in \iconstraint$
such that $B ' \relstate{i} B''$.
The class of  CMABs is denoted by $\consclassbelbase$.
\end{definition}

Let $\varphi \in \lang$,
we say that $\varphi$
is valid
for the class  of CMABs
if and only if, for every  $(B , \iconstraint) \in \consclassbelbase $,
we have $ (B , \iconstraint) \models \varphi $.
We say that
$\varphi$
is satisfiable
for the the class  of  CMABs
if and only if $\neg \varphi $
is not
valid
for the the class of CMABs.

\subsection{Notional doxastic model semantics }\label{semanticSect}

Let us now
consider
the semantics
for
 $\logic$
 based on the concept of
 notional doxastic model (NDM).
 It
 is defined
 in the next Definition \ref{modeldef},
 together with
 the satisfaction
 relation for the formulas
 of the language $\lang_\logic(\PROP)$.

\begin{definition}[Doxastic model]\label{modeldef}
A notional doxastic model (NDM) is a tuple $ M = (W , \belbase, \doxset, \valfunct )$
where:
\begin{itemize}
\item $W$ is a set of worlds,

\item $\belbase : \AGT \times W \longrightarrow 2^{\langminus_\logic}$ is a doxastic function,

\item $ \doxset :  \AGT \times W  \longrightarrow 2^{W}$
is a notional function, and

\item $ \valfunct : \PROP \longrightarrow 2^{W}$ is a valuation function,
\end{itemize}
and that satisfies the following conditions
for all
$i \in \AGT$ and
$ w \in W$:
\begin{itemlist}{C5}
\item[(C1)]    $   \doxset(i,w)  = \bigcap_{\alpha \in  \belbase(i,w) }  ||\alpha ||_M$, and
\item[(C2)]    there exists $v \in W$ such that $ v\in   \doxset(i,w) $,
\end{itemlist}
with:
\begin{eqnarray*}
 (M,w) \models p & \Longleftrightarrow & w \in  \valfunct(p) \\
   (M,w) \models \neg \varphi & \Longleftrightarrow &    (M,w) \not \models  \varphi \\
   (M,w) \models \varphi \wedge \psi & \Longleftrightarrow &   (M,w) \models \varphi   \text{ and }    (M,w)\models \psi \\
    (M,w) \models \expbel{i} \alpha   & \Longleftrightarrow & \alpha \in  \belbase (i,w) \\
(M,w) \models \impbel{i} \varphi & \Longleftrightarrow & \forall v \in  \doxset(i,w)  :  (M,v) \models \varphi
\end{eqnarray*}
and
\begin{align*}
 ||\alpha ||_M = \{ v \in W \suchthat (M,v) \models \alpha \}.
\end{align*}
 \end{definition}
 The class of notional doxastic models is denoted by $\notmod$.

 We say that a NDM $ M = (W , \awbase, \belbase, \doxset, \valfunct )$
is \emph{finite}
if and only if
$W$,
$\belbase (i,w)$
and $\valfunct^{-1} (w)$
are finite sets
for every $i \in \AGT$
and for every $w\in W$,
where $\valfunct^{-1}$
is the inverse function of $\valfunct$.

For every agent $i$
and world $w$,
$\belbase (i,w) $ denotes
  agent $i$'s set
  of explicit beliefs
  at $w$.

  The set
$ \doxset(i,w)$,
used in the interpretation
of the implicit belief operator
$\impbel{i}$,
is called agent $i$'s set of \emph{notional} worlds at world $w$.
The term `notional' is taken from \cite{DennettIntStance,Dennett2} (see, also,  \cite{KONOLIGE}):
an agent's notional world
is  \emph{a world
in which all the agent's explicit beliefs are true}.
This idea is clearly expressed
by the Condition C1.
According to the
Condition C2, an agent's set of notional worlds
must be non-empty. This guarantees
consistency
of the agent's implicit beliefs.

Let $\varphi \in \lang$,
we say that $\varphi$
is valid
for the the class  of NDMs
if and only if, for every $M = (W , \awbase, \belbase, \doxset, \valfunct ) \in \notmod $ and for every $w \in W$,
we have $ (M,w) \models \varphi $.
We say that
$\varphi$
is satisfiable
for the the class  of  NDMs
if and only if $\neg \varphi $
is not
valid
for the the class of NDMs.

\subsection{Quasi-model semantics }

In this section we provide an alternative
semantics for the logic $\logic$
based on a more general class of models, called
quasi-notional doxastic models (quasi-NDMs).
This semantics will be fundamental
for proving completeness of $\logic$.

\begin{definition}[Quasi-notional doxastic model]\label{QNDM}
A quasi-notional doxastic model (quasi-NDM) is a tuple $ M = (W , \belbase, \doxset  ,\valfunct )$
where
$W ,  \belbase, \doxset$ and $\valfunct $
are as in Definition
 \ref{modeldef}
 except that Condition C1 is replaced by the following weaker condition,
 for all
$i \in \AGT$ and
$ w \in W$:
\begin{itemlist}{C8}
\item[(C1$^*$)]     $   \doxset(i,w)  \subseteq \bigcap_{\alpha \in  \belbase(i,w) }  ||\alpha ||_M$.
\end{itemlist}
 \end{definition}
  The class of  quasi-notional doxastic models is denoted by $\quasinotmod $.
 Truth conditions of formulas in $\lang$
 relative to this class
 are the same as
  truth conditions of formulas in $\lang$
 relative to the class $\notmod $.
 Validity and satisfiability of
 a $\logic$ formula
 $\varphi$
 for the the class of quasi-NDMs
 are defined in the usual way.

 As for NDMs, we say that a quasi-NDM $ M = (W , \belbase, \doxset, \valfunct )$
is \emph{finite}
if and only if
$W$, $\belbase (i,w)$
and  $\valfunct^{-1} (w)$
are finite sets
for every $i \in \AGT$
and for every $w\in W$.

\section{Equivalences between semantics}\label{finitemodelproperty}

The present section is devoted to present  equivalences
between the different semantics for the language $\lang_\logic(\PROP)$.
The results of the section are summarized in Figure \ref{fig:equivfigure}.
%\begin{enumerate}
%\item equivalence between the semantics
%in terms
%of quasi-NDMs
%and the semantics in terms of \emph{finite}
%quasi-NDMSs,
%
%
% \item equivalence between the semantics
%in terms
%of finite NDMs
%and the semantics in terms of finite
%quasi-NDMSs,  and
%
% \item equivalence between the semantics
%in terms
%of finite NDMs
%and the semantics in terms of finite
%quasi-NDMSs.
%
%
%\end{enumerate}

\begin{figure}[h]
\centering
\includegraphics[width=0.47\textwidth]{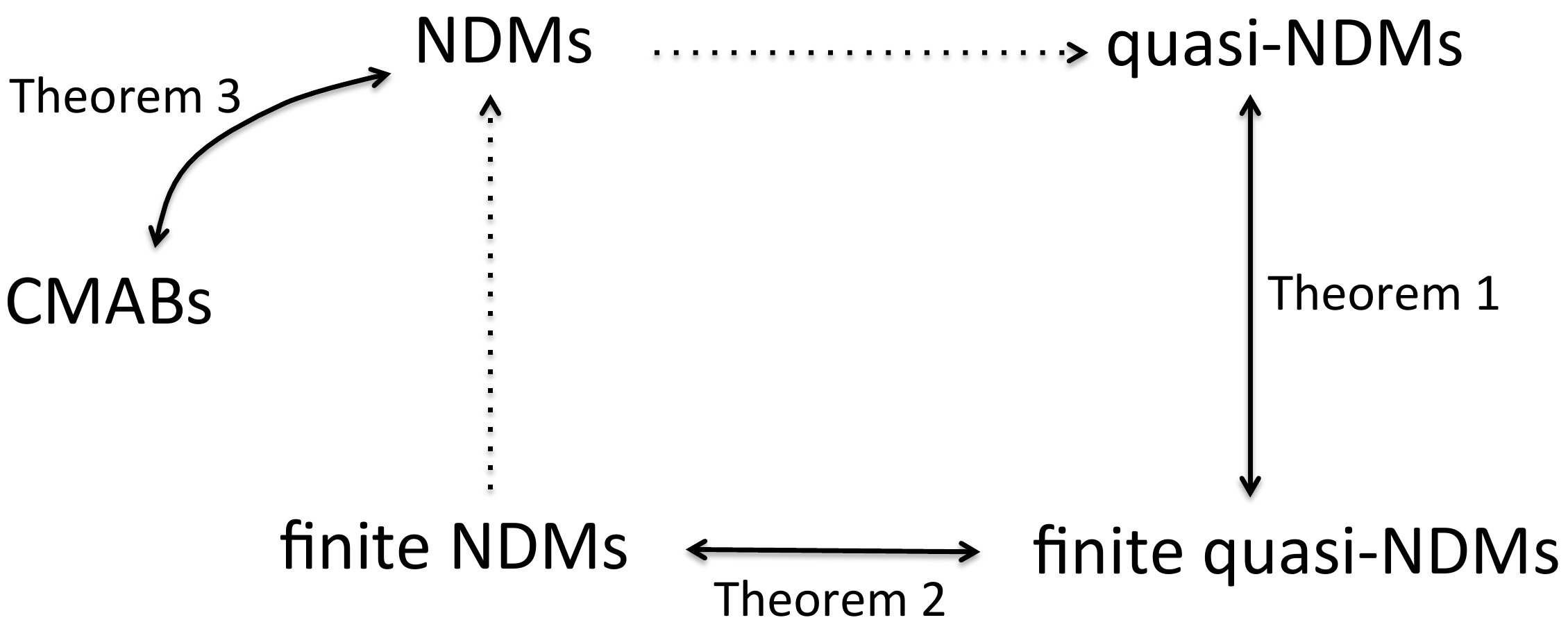}
\caption{Relations between semantics. An arrow means
that satisfiability relative to the first class of structures
 implies satisfiability relative to the second class of structures. Dotted arrows
 denote relations that follow straightforwardly
 given the inclusion between classes of structures.  }
\label{fig:equivfigure}
\end{figure}

The figure highlights that the five semantics
for the language $\lang_\logic(\PROP)$ defined in the previous section
are all equivalent, as from every node in the graph
we can reach all other nodes.

\paragraph{Equivalence between quasi-NDMs and finite quasi-NDMs}

We use a filtration argument
to show that if a
formula $\varphi$
of the language $ \lang$
is true in a (possibly infinite) quasi-NDM
then it is true in a finite quasi-NDM.

Let $ M = (W ,  \belbase, \doxset, \valfunct )$ be a (possibly
infinite) quasi-NDM
and let $\Sigma \subseteq \lang_\logic$
be an arbitrary finite set of formulas which is closed under subformulas. (Cf. Definition 2.35 in \cite{Bla01} for
a definition of subformulas closed set of formulas.)
Let the equivalence relation $\equiv_\Sigma$ on $W$ be defined as follows. For all $w,v \in W$:
\begin{align*}
w \equiv_\Sigma v \text{ iff } \forall \varphi \in \Sigma : (M,w) \models \varphi \text{ iff } (M,v) \models \varphi.
\end{align*}
Let $|w |_\Sigma$
be the equivalence class of the world $w$ with respect to the equivalence relation $\equiv_\Sigma$.

We define $W_\Sigma$ to be the filtrated set of worlds with respect to $\Sigma$:
\begin{align*}
W_\Sigma = \{ |w |_\Sigma \suchthat w \in W \}.
\end{align*}
Clearly, $W_\Sigma$
is a finite set.

Let us define the filtrated valuation function $\valfunct_\Sigma $. For every $p \in \PROP$,
we define:
\begin{align*}
\valfunct_\Sigma (p) & = \{  |w |_\Sigma \suchthat (M,w) \models p  \}  & \text{if } p \in \PROP(\Sigma)  \\
\valfunct_\Sigma (p)  & = \emptyset &  \text{otherwise}
\end{align*}

 The next step in the construction consists in defining the filtrated
 doxastic function. For every $i \in \AGT$
and for every $ |w |_\Sigma  \in W_\Sigma$, we define:
\begin{align*}
\belbase_\Sigma  (i, |w |_\Sigma)  &= \big(\bigcap_{w \in  |w |_\Sigma }\belbase(i,w) \big)\cap \Sigma .
\end{align*}

Finally,
for every $i \in \AGT$
and for every $ |w |_\Sigma \in W_\Sigma$,
we define
agent $i$'s set of notional worlds
at $ |w |_\Sigma$ as follows:
\begin{align*}
\doxset_\Sigma(i, |w |_\Sigma)  = \{ |v |_\Sigma \suchthat v \in  \doxset(i,w)  \}.
\end{align*}

%$ \bigcap_{\alpha \in  \belbase_\Sigma (i,  |w |_\Sigma) }  ||\alpha ||_{M_\Sigma}$

We call the model
$ M_\Sigma = (W_\Sigma , \awbase_\Sigma, \belbase_\Sigma, \doxset_\Sigma, \valfunct_\Sigma )$
the filtration of $M$
under $\Sigma$.

We can state the following filtration lemma.
\begin{lemma}\label{theoremFiltration}
Let $\varphi \in  \Sigma$
and let $w \in W$.
Then,
$(M,w) \models \varphi$
if and only if $(M_\Sigma, |w |_\Sigma) \models \varphi$.
\end{lemma}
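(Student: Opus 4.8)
The plan is to prove the filtration lemma by induction on the structure of the formula $\varphi \in \Sigma$, using the fact that $\Sigma$ is closed under subformulas so that all subformulas of $\varphi$ also lie in $\Sigma$ and hence satisfy the induction hypothesis. The Boolean cases ($p$, $\neg\psi$, $\psi_1 \wedge \psi_2$) are routine: for atomic $p \in \Sigma$ we have $p \in \PROP(\Sigma)$, so $(M_\Sigma, |w|_\Sigma) \models p$ iff $|w|_\Sigma \in \valfunct_\Sigma(p)$ iff $(M,w) \models p$ by definition of $\valfunct_\Sigma$; the negation and conjunction cases follow immediately from the induction hypothesis and the fact that these connectives stay within $\Sigma$.

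The two interesting cases are the explicit belief operator and the implicit belief operator. For $\varphi = \expbel{i}\alpha \in \Sigma$: by subformula closure $\alpha \in \Sigma$, and one must show $\alpha \in \belbase(i,w)$ iff $\alpha \in \belbase_\Sigma(i,|w|_\Sigma)$. The direction from the filtrated model back to $M$ uses that $w \in |w|_\Sigma$, so $\belbase_\Sigma(i,|w|_\Sigma) \subseteq \belbase(i,w)$. For the converse one needs that if $\alpha \in \belbase(i,w)$ then $\alpha \in \belbase(i,w')$ for every $w' \equiv_\Sigma w$; this holds because $\alpha \in \Sigma$ and $w' \equiv_\Sigma w$ means $w'$ agrees with $w$ on all formulas of $\Sigma$, in particular on $\expbel{i}\alpha$, so $(M,w') \models \expbel{i}\alpha$, i.e.\ $\alpha \in \belbase(i,w')$. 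Hence $\alpha \in \bigcap_{w' \in |w|_\Sigma}\belbase(i,w')$, and since $\alpha \in \Sigma$ we get $\alpha \in \belbase_\Sigma(i,|w|_\Sigma)$ as required.

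For $\varphi = \impbel{i}\psi \in \Sigma$ with $\psi \in \Sigma$: this is the standard filtration argument for a normal modal operator, and I expect it to be the main technical point. Suppose $(M,w) \models \impbel{i}\psi$ and let $|v|_\Sigma \in \doxset_\Sigma(i,|w|_\Sigma)$; by definition of $\doxset_\Sigma$ there is $v' \in \doxset(i,w)$ with $|v'|_\Sigma = |v|_\Sigma$, so $(M,v') \models \psi$, and since $\psi \in \Sigma$ and $v' \equiv_\Sigma v$ we get $(M,v) \models \psi$, whence $(M_\Sigma,|v|_\Sigma) \models \psi$ by the induction hypothesis. Conversely, suppose $(M_\Sigma,|w|_\Sigma) \models \impbel{i}\psi$ and let $v \in \doxset(i,w)$; then $|v|_\Sigma \in \doxset_\Sigma(i,|w|_\Sigma)$ directly by the definition of $\doxset_\Sigma$, so $(M_\Sigma,|v|_\Sigma) \models \psi$, and the induction hypothesis gives $(M,v) \models \psi$; as $v$ was arbitrary, $(M,w) \models \impbel{i}\psi$. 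The key observation making both directions go through smoothly is that $\doxset_\Sigma$ is defined as the direct image of $\doxset$ under the quotient map $w \mapsto |w|_\Sigma$, which is exactly what is needed so that the forward and backward filtration conditions hold simultaneously; one should also note that no appeal to Condition C1 or C1$^*$ is required for this lemma, since the truth conditions for $\impbel{i}$ refer only to $\doxset_\Sigma$ itself, and the fact that $M_\Sigma$ is again a quasi-NDM (if needed elsewhere) would be checked separately.
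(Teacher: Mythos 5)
Your overall strategy (induction on $\varphi$, routine atomic and Boolean cases, the membership argument for $\expbel{i}\alpha$ via $\expbel{i}\alpha \in \Sigma$, and the observation that C1/C1$^*$ play no role in this lemma) coincides with the paper's proof. The problem is in the left-to-right direction of the modal case. You read $\doxset_\Sigma(i,|w|_\Sigma)$ as the direct image of $\doxset(i,w)$ for the very representative $w$ you are working with, and from $|v|_\Sigma \in \doxset_\Sigma(i,|w|_\Sigma)$ you extract a witness $v' \in \doxset(i,w)$. But the set $\{|v|_\Sigma \suchthat v \in \doxset(i,w)\}$ genuinely depends on the chosen representative: two worlds $w \equiv_\Sigma w'$ agree on all $\Sigma$-formulas yet may have $\doxset(i,w)$ and $\doxset(i,w')$ projecting onto different sets of equivalence classes. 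For $\doxset_\Sigma$ to be well defined one must take the union over all representatives of the class, i.e.\ the smallest filtration, which is exactly how the paper reads its own definition. Under that reading, $|v|_\Sigma \in \doxset_\Sigma(i,|w|_\Sigma)$ only yields some $w' \equiv_\Sigma w$ and $v' \in \doxset(i,w')$ with $|v'|_\Sigma = |v|_\Sigma$, and the hypothesis $(M,w) \models \impbel{i}\psi$ says nothing directly about such a $v'$.

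The missing step is precisely the ``max'' filtration property that the paper isolates as item (ii) (stated there in terms of the diamond $\impbelposs{i}$): since $\impbel{i}\psi \in \Sigma$ and $w \equiv_\Sigma w'$, the truth of $\impbel{i}\psi$ transfers from $w$ to $w'$, so $(M,v') \models \psi$, and then the induction hypothesis closes the case. Tellingly, your forward argument never uses $\impbel{i}\psi \in \Sigma$ except to obtain $\psi \in \Sigma$, whereas for the smallest filtration the membership of the modal formula itself in $\Sigma$ is what makes this direction true; your closing remark that ``both directions go through smoothly'' because $\doxset_\Sigma$ is a direct image of $\doxset$ under the quotient map is exactly the point at which the representative-dependence is being overlooked. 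The right-to-left direction is fine under the correct (union) reading, since $v \in \doxset(i,w)$ with $w$ a member of its own class does put $|v|_\Sigma$ into $\doxset_\Sigma(i,|w|_\Sigma)$. The repair is a one-line addition, but as written the forward modal case is incomplete.
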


\begin{proof}
The proof is by induction on the structure of $\varphi$.
For the ease of exposition,
we prove our result
for the language $\lang$
in which
the ``diamond'' operator
$ \impbelposs{i} $
is taken as primitive and
the ``box'' operator
$ \impbel{i} $
is defined from it.
Since
the two operators are inter-definable,
this does not affect the validity
of our result.

The case $\varphi = p $
is immediate from the definition of $\valfunct_\Sigma$.
The boolean cases $\varphi = \neg \psi $
and  $\varphi = \psi_1 \wedge \psi_2$
follow straightforwardly from the fact that $\Sigma$
is closed under subformulas. This allows us to apply the induction hypothesis.

Let us prove the case  $\varphi =  \expbel{i} \alpha $.

($\Rightarrow$)
Suppose $(M,w) \models  \expbel{i} \alpha  $
with $ \expbel{i} \alpha  \in \Sigma$.
Thus, $ \alpha \in \belbase(i,w)$. Hence,
by definition of $\belbase_\Sigma(i, |w |_\Sigma)$,
 the fact that $\Sigma$
is closed under subformulas
and
the fact that
if $\expbel{i} \alpha  \in \Sigma$
and $ \alpha \in \belbase(i,w)$ then 
 $\alpha \in \bigcap_{w \in  |w |_\Sigma }\belbase(i,w) $, we have $ \alpha  \in \belbase_\Sigma(i, |w |_\Sigma)$.
It follows that $(M_\Sigma, |w |_\Sigma) \models \expbel{i} \alpha $.

($\Leftarrow$) For the other direction,
suppose $(M_\Sigma, |w |_\Sigma) \models \expbel{i} \alpha $
with $ \expbel{i} \alpha  \in \Sigma$.
Thus, $ \alpha \in \belbase_\Sigma(i, |w |_\Sigma)$.
Hence,
by definition of $\belbase_\Sigma(i, |w |_\Sigma)$, $ \alpha  \in \belbase(i,w)$.

%($\Rightarrow$) This direction is proved in the same way as the $\Rightarrow$-direction
%for the case $\varphi =  \expbel{i} \beta  $.
%
%($\Leftarrow$) For the other direction,
%suppose $(M_\Sigma, |w |_\Sigma) \models \aware{i} \beta $.
%Thus, $ \beta  \in \awbase_\Sigma (i, |w |_\Sigma) $.
%Two cases have to be considered: $\beta \in \awbase(i,w) \cap \Sigma $
%or $\beta \in  \{ \varphi_{i, |w |_\Sigma}  \} \cup \PROP (\varphi_{i, |w |_\Sigma} )$.
%In the former case, $\beta  \in \awbase(i,w)$. Thus, $(M, w ) \models \aware{i} \beta $.
%The latter case is impossible. Indeed,
%we have $\PROP(\beta ) \cap \PROP(\Sigma) = \emptyset$.
%Thus, clearly, $\beta  \not \in \Sigma $
%because $\Sigma$
%is closed under subformulas.
%But this is in contradiction
%with the initial assumption that  $ \aware{i} \beta  \in \Sigma$
%and with the fact that $\Sigma$
%is closed under subformulas.

Let us conclude the proof for the case $\varphi =  \impbelposs{i} \psi $.
It is easy to check that
$\doxset_\Sigma$ gives rise to the smallest filtration and that the following two properties hold
for all $w,v \in W$
and for all $i \in \AGT$:

\begin{itemize}

\item[(i)] if $v \in \doxset(i,w)$ then $ |v |_\Sigma \in \doxset_\Sigma(i, |w |_\Sigma)  $, and

\item[(ii)] if $ |v |_\Sigma \in \doxset_\Sigma(i, |w |_\Sigma)  $
then for all $ \impbelposs{i} \varphi \in \Sigma $, if $M, v \models   \varphi$ then $M, w \models \impbelposs{i}  \varphi$.

\end{itemize}

($\Rightarrow$)
Suppose $(M,w) \models  \impbelposs{i}  \psi   $
with $  \impbelposs{i}  \psi   \in \Sigma$.
Thus, there exists $v  \in \doxset(i,w)$ such that $(M,v) \models \psi$.
By the previous item (i),
$ |v |_\Sigma  \in \doxset_\Sigma(i, |w |_\Sigma)$.
Since $\Sigma$
is closed under subformulas,
we have $\psi \in \Sigma$.
Thus, by the induction hypothesis, $(M_\Sigma, |v |_\Sigma) \models \psi$.
It follows that $(M_\Sigma, |w |_\Sigma) \models  \impbelposs{i}  \psi$.

($\Leftarrow$) For the other direction,
suppose $(M_\Sigma, |w |_\Sigma) \models  \impbelposs{i}  \psi  $
with $  \impbelposs{i}  \psi   \in \Sigma$.
Thus,  there exists $ |v |_\Sigma  \in \doxset_\Sigma(i, |w |_\Sigma)$, such that $(M_\Sigma, |v |_\Sigma) \models \psi$.
Since $\Sigma$
is closed under subformulas,
by the induction  hypothesis,
we have  $(M, v) \models \psi$.
By the item (ii) above, it follows that $(M, w) \models  \impbelposs{i}  \psi$.
\end{proof}

The next step  consists in proving that
$M_\Sigma$
is the right model construction.

\begin{proposition}\label{goodcon}
The tuple
$ M_\Sigma = (W_\Sigma , \awbase_\Sigma, \belbase_\Sigma, \doxset_\Sigma, \valfunct_\Sigma )$
is a finite quasi-NDM.
\end{proposition}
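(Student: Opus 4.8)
The plan is to go through the clauses of Definition \ref{QNDM} one at a time: check that the components $W_\Sigma$, $\belbase_\Sigma$, $\doxset_\Sigma$, $\valfunct_\Sigma$ have the prescribed types and are well defined as functions on $W_\Sigma$, check that $M_\Sigma$ is finite, and finally verify Conditions (C1$^*$) and (C2) for every agent $i$ and every world $|w|_\Sigma \in W_\Sigma$. The key tool will be the Filtration Lemma (Lemma \ref{theoremFiltration}); there is no circularity in invoking it here, since its proof uses only the explicit definitions of $W_\Sigma$, $\belbase_\Sigma$, $\doxset_\Sigma$ and $\valfunct_\Sigma$, not the fact that $M_\Sigma$ is a quasi-NDM.

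First I would dispatch the typing and finiteness, which are routine. Since $\Sigma$ is finite, $\equiv_\Sigma$ has at most $2^{|\Sigma|}$ classes, so $W_\Sigma$ is finite; $\belbase_\Sigma(i,|w|_\Sigma)$ is by construction a subset of $\Sigma$, hence a finite subset of $\langminus_\logic$; $\doxset_\Sigma(i,|w|_\Sigma)\subseteq W_\Sigma$ and $\valfunct_\Sigma(p)\subseteq W_\Sigma$ by definition; and $\valfunct_\Sigma^{-1}(|w|_\Sigma)\subseteq\PROP(\Sigma)$, which is finite. Well-definedness (independence of the representative) is immediate for $\belbase_\Sigma$, which is defined by an intersection over the whole class, and for $\valfunct_\Sigma$, since every atom of $\PROP(\Sigma)$ is itself a subformula in $\Sigma$ and so invariant under $\equiv_\Sigma$. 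This shows $M_\Sigma$ is a finite structure of the correct shape. Condition (C2) is also essentially free: given $i$ and $|w|_\Sigma$, Condition (C2) for the quasi-NDM $M$ yields some $v\in\doxset(i,w)$, and then $|v|_\Sigma\in\doxset_\Sigma(i,|w|_\Sigma)$ directly from the definition of $\doxset_\Sigma$.

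The one substantive step is Condition (C1$^*$). Fix $i$ and $|w|_\Sigma$, take $|v|_\Sigma\in\doxset_\Sigma(i,|w|_\Sigma)$ and $\alpha\in\belbase_\Sigma(i,|w|_\Sigma)$; I must show $(M_\Sigma,|v|_\Sigma)\models\alpha$. Unwinding the definition of $\doxset_\Sigma$, there are $w'\equiv_\Sigma w$ and $v'\equiv_\Sigma v$ with $v'\in\doxset(i,w')$ (this is immediate with $w'=w$ if one reads $\doxset_\Sigma$ with a fixed representative). Since $\belbase_\Sigma(i,|w|_\Sigma)$ is the \emph{intersection} of the $\belbase(i,u)$ over all $u\in|w|_\Sigma$, from $\alpha\in\belbase_\Sigma(i,|w|_\Sigma)$ we get both $\alpha\in\belbase(i,w')$ and $\alpha\in\Sigma$. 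Condition (C1$^*$) for $M$ then gives $\doxset(i,w')\subseteq||\alpha||_M$, hence $(M,v')\models\alpha$; and since $\alpha\in\Sigma$ and $\Sigma$ is closed under subformulas, Lemma \ref{theoremFiltration} transfers this to $(M_\Sigma,|v'|_\Sigma)\models\alpha$, i.e. $(M_\Sigma,|v|_\Sigma)\models\alpha$. As $\alpha$ was arbitrary in $\belbase_\Sigma(i,|w|_\Sigma)$, this yields $|v|_\Sigma\in\bigcap_{\alpha\in\belbase_\Sigma(i,|w|_\Sigma)}||\alpha||_{M_\Sigma}$, which is (C1$^*$) for $M_\Sigma$. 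The main (and only mild) obstacle is recognizing why $\belbase_\Sigma$ is defined as the intersection over the whole equivalence class and then cut down to $\Sigma$: the intersection is exactly what lets us conclude $\alpha\in\belbase(i,w')$ regardless of which representative $w'$ produced the witness $v'$, and the ``$\cap\,\Sigma$'' is exactly what makes Lemma \ref{theoremFiltration} applicable, so that truth of $\alpha$ at $v'$ in $M$ can be pushed down to $|v'|_\Sigma$ in $M_\Sigma$.
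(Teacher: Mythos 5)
Your proof is correct and follows essentially the same route as the paper: finiteness and Condition (C2) are dispatched directly from the construction, and Condition (C1$^*$) is obtained by combining Condition (C1$^*$) of the original model $M$ with Lemma \ref{theoremFiltration} applied to the formulas of $\belbase_\Sigma(i,|w|_\Sigma)$, which lie in $\Sigma$ by construction. Your elementwise argument (and the remarks on representative-independence and non-circularity of invoking the Filtration Lemma) is just a more explicit rendering of the paper's chain of inclusions, not a different method.
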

\begin{proof}
Clearly, $M_\Sigma$
is finite.
Moreover, it is easy to verify that it satisfies the Condition C2 in Definition \ref{modeldef}.
We are going to prove that it satisfies the Condition C1$^*$
in Definition \ref{QNDM}.

By Lemma \ref{theoremFiltration},
if $\alpha \in  \big(\bigcap_{w \in  |w |_\Sigma }\belbase(i,w) \big)\cap \Sigma$
then $ ||\alpha ||_{M_\Sigma}  =  \{    |v |_\Sigma \suchthat v \in  ||\alpha ||_{M}    \} $.
Moreover, as $M$ is a quasi-NDM,
we have
\begin{align*}
 \doxset(i,w)  \subseteq \bigcap_{\alpha \in  \belbase(i,w)  }   ||\alpha ||_{M} \subseteq
\bigcap_{\alpha \in  \big(\bigcap_{w \in  |w |_\Sigma }\belbase(i,w) \big)\cap \Sigma }   ||\alpha ||_{M}  .
\end{align*}
Hence, by definitions of $\doxset_\Sigma(i, |w |_\Sigma) $ and $\belbase_\Sigma$,
\begin{align*}
\doxset_\Sigma(i, |w |_\Sigma)    \subseteq
  \bigcap_{\alpha \in  \belbase_\Sigma  (i, |w |_\Sigma)   }   ||\alpha ||_{M_\Sigma}.
\end{align*}
\end{proof}

The following is
our first result
about equivalence
between
the semantics
in terms
of quasi-NDMs
and the semantics
in terms
of finite quasi-NDMS.

\begin{theorem}\label{finitemodel}
Let $\varphi \in \lang$.
Then, if $\varphi $
is satisfiable
for the class
of quasi-NDMs,
if and only if it
is satisfiable
for the class
of finite quasi-NDMs.
\end{theorem}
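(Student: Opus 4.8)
The plan is to prove the two directions separately, with essentially all of the real work already carried by Lemma~\ref{theoremFiltration} and Proposition~\ref{goodcon}. The right-to-left direction is immediate: every finite quasi-NDM is in particular a quasi-NDM, so if $\varphi$ is true at some world of a finite quasi-NDM, it is true at that same world of a quasi-NDM, and hence satisfiability for the class of finite quasi-NDMs implies satisfiability for the class of quasi-NDMs. (This is one of the dotted-arrow inclusions of Figure~\ref{fig:equivfigure}.)

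For the left-to-right direction, I would start from a quasi-NDM $M = (W, \belbase, \doxset, \valfunct )$ and a world $w \in W$ with $(M,w) \models \varphi$, and take $\Sigma$ to be the set of all subformulas of $\varphi$. This $\Sigma$ is finite and closed under subformulas, so the filtration construction of the present section applies; let $M_\Sigma$ be the filtration of $M$ under $\Sigma$. By Proposition~\ref{goodcon}, $M_\Sigma$ is a finite quasi-NDM. Since $\varphi \in \Sigma$, Lemma~\ref{theoremFiltration} yields $(M_\Sigma, |w|_\Sigma) \models \varphi$. Hence $\varphi$ is satisfiable for the class of finite quasi-NDMs, which closes the argument.

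As for the main obstacle: there is essentially none left at this stage. The delicate points — that the filtrated model preserves truth of all formulas in $\Sigma$, and that it remains inside the class $\quasinotmod$, i.e.\ that the weaker Condition C1$^*$ survives filtration (which is precisely where the smallest-filtration-style definition of $\doxset_\Sigma$ is used) — have already been discharged in Lemma~\ref{theoremFiltration} and Proposition~\ref{goodcon}. The only thing requiring a moment's care is the choice of $\Sigma$: filtration must be taken through a finite, subformula-closed set, and one also needs $\varphi \in \Sigma$ so that the filtration lemma applies to $\varphi$ itself; taking $\Sigma$ to be exactly the set of subformulas of $\varphi$ secures all of these at once.
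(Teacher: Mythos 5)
Your proposal is correct and matches the paper's own argument: the paper likewise notes the right-to-left direction is immediate and, for the other direction, filters the given quasi-NDM through $\Sigma = \mathit{sub}(\varphi)$, invoking Lemma~\ref{theoremFiltration} and Proposition~\ref{goodcon} to obtain a finite quasi-NDM satisfying $\varphi$. No gaps.
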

\begin{proof}
The right-to-left direction is obvious.
As for the left-to-right direction,
let  $M$
be a possibly infinite quasi-NDM and
let $w$ be a world in $M$
such that $(M,w) \models \varphi$.
Moreover, let $\mathit{sub}(\varphi)$
be the set of subformulas of $\varphi$.
 Then, by Lemma \ref{theoremFiltration}
and Proposition \ref{goodcon}, $(M_{\mathit{sub}(\varphi)}, |w |_{\mathit{sub}(\varphi)} ) \models \varphi $
and $M_{\mathit{sub}(\varphi)}$
is a finite quasi-NDM.
\end{proof}

%\section{Equivalence between semantics}\label{equivsemantics}

\paragraph{Equivalence between finite NDMs and finite quasi-NDMs}

As the following theorem highlights,
the  $\logic$ semantics
in terms of finite NDMs
and the  $\logic$ semantics
in terms  of  finite quasi-NDMs
are equivalent.

\begin{theorem}\label{eqsemantics}
Let
 $\varphi \in \lang$.
 Then, $\varphi$
 is satisfiable for  the class
 of finite NDMs if and only if
 $\varphi$
 is satisfiable for the class
 of finite quasi-NDMs.
\end{theorem}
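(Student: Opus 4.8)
The plan is to establish both directions of the equivalence. The right-to-left direction is immediate: every finite NDM is a finite quasi-NDM, since Condition C1 implies Condition C1$^*$. So the real work is the left-to-right direction: given a finite quasi-NDM $M = (W, \belbase, \doxset, \valfunct)$ and a world $w$ with $(M,w) \models \varphi$, I need to produce a finite NDM satisfying $\varphi$. The natural idea is \emph{not} to modify $\doxset$ (which would destroy C1$^*$ in the wrong direction and could change the truth value of implicit-belief formulas), but rather to \emph{shrink the belief bases} so that C1 holds with equality. That is, for each $i$ and $w$, I would define a new doxastic function $\belbase'(i,w)$ that still generates the same intersection of truth sets as $\doxset(i,w)$ but is contained in it — except that this must be done without disturbing the truth of $\expbel{i}\alpha$ formulas that occur in $\varphi$.

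Here is the concrete construction I would try. Let $\Sigma = \mathit{sub}(\varphi)$, which is finite and subformula-closed. For each agent $i$ and world $w$, set
\begin{align*}
\belbase'(i,w) = \bigl(\belbase(i,w) \cap \Sigma\bigr) \cup \{\beta_{i,w}\},
\end{align*}
where $\beta_{i,w}$ is a single formula (using fresh atoms outside $\ATM(\varphi)$, or an explicit finite conjunction/characterization of worlds) whose truth set in the new model is exactly $\doxset(i,w)$. Since $W$ is finite and valuations restricted to any world are finite, and since $\PROP$ is infinite, there are enough fresh atomic propositions to ``name'' each subset of $W$; alternatively, since $M$ is finite one can characterize $\doxset(i,w)$ as a finite union of worlds each described by a finite formula over fresh atoms. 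I then extend $\valfunct$ to a valuation $\valfunct'$ interpreting these fresh atoms appropriately, keeping $\valfunct'$ restricted to $\ATM(\varphi)$ unchanged, and define $\doxset'(i,w) = \doxset(i,w)$. By C1$^*$ for $M$, each $\alpha \in \belbase(i,w) \cap \Sigma$ is true throughout $\doxset(i,w)$, so adding $\beta_{i,w}$ makes $\bigcap_{\alpha \in \belbase'(i,w)} \|\alpha\|_{M'} = \doxset(i,w) = \doxset'(i,w)$, which is exactly Condition C1. Condition C2 is inherited since $\doxset'(i,w) = \doxset(i,w)$ is still non-empty, and finiteness is clear. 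Finally, a routine induction on the structure of formulas in $\Sigma$ shows $(M,w) \models \psi$ iff $(M',w) \models \psi$ for all $\psi \in \Sigma$: the atomic case holds because $\valfunct'$ agrees with $\valfunct$ on $\ATM(\varphi)$; the Boolean cases use subformula-closure; the $\expbel{i}\alpha$ case holds because $\alpha \in \belbase(i,w)$ iff $\alpha \in \belbase'(i,w)$ for $\alpha \in \Sigma$ (here I must ensure the fresh formula $\beta_{i,w}$ is never itself an element of $\Sigma$, which holds since it uses fresh atoms); and the $\impbel{i}\psi$ case holds because $\doxset'(i,w) = \doxset(i,w)$ and the induction hypothesis applies to $\psi \in \Sigma$.

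The main obstacle, and the point requiring care, is the $\expbel{i}\alpha$ preservation step combined with the construction of the ``naming'' formula $\beta_{i,w}$. I must guarantee simultaneously that (a) $\beta_{i,w}$ pins down the set $\doxset(i,w)$ exactly in the new model, (b) $\beta_{i,w} \notin \Sigma$ so that truth of $\expbel{i}\alpha$ for $\alpha \in \Sigma$ is unaffected, and (c) the fresh atoms used in $\beta_{i,w}$ do not interfere with the valuation of atoms in $\ATM(\varphi)$ nor with the belief bases relevant to $\Sigma$. Using atoms from $\PROP \setminus \ATM(\varphi)$ — which is infinite — handles (b) and (c) cleanly, and since $W$ is finite, a finite Boolean combination of such atoms suffices for (a): distinct worlds can be separated by assigning them distinct truth-value patterns over finitely many fresh atoms, and then $\beta_{i,w}$ is the disjunction of the (finite) patterns of the worlds in $\doxset(i,w)$. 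One should double-check that this disjunction, as a formula of $\langminus_\logic$, legitimately belongs in a belief base (it does, being built from atoms, $\neg$, $\wedge$) and that its $M'$-truth-set is as intended. Once this bookkeeping is in place, the theorem follows.
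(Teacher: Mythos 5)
Your proposal is correct and is essentially the paper's own construction: keep $\doxset$ unchanged and add to each belief base a fresh-atom ``name'' whose truth set is exactly $\doxset(i,w)$, so that Condition C1$^*$ tightens to C1 while freshness keeps the truth of all formulas relevant to $\varphi$ (in particular the $\expbel{i}\alpha$ cases) unchanged. The paper's version is just slightly leaner: it keeps the full belief base $\belbase(i,w)$ and adds a single fresh atom $f(i,w)$ with $\valfunct'(f(i,w)) = \doxset(i,w)$, rather than trimming to $\mathit{sub}(\varphi)$ and building a Boolean naming formula from world-describing patterns.
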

\begin{proof}
The left-to-right direction is obvious.
We are going to prove the right-to-left direction.

Let $ M = (W , \belbase, \doxset  ,\valfunct )$ be a
finite
quasi-NDM that satisfies $\varphi$, i.e.,
there exists $w \in W$ such that $(M,w) \models \varphi$.
Let
\begin{align*}
\mathcal{T}(M) = \cup_{w \in W, i \in \AGT} \ATM(\belbase(i,w) )
\end{align*}
be the \emph{terminology}
of model $M$ including all atomic propositions that are in the explicit beliefs
of some  agent at some world in $M$.
Since $M$ is finite, $\mathcal{T}(M) $
is finite too.

Let us introduce an injective function:
\begin{align*}
f: \AGT \times W \longrightarrow  \PROP \setminus ( \mathcal{T}(M)  \cup  \PROP(\varphi)  )
\end{align*}
which assigns an identifier to every agent in $\AGT$ and world in $W $.
The fact that $\PROP$
is infinite while $W$, $ \mathcal{T}(M) $ and $ \PROP(\varphi) $
are finite guarantees that such an injection  exists.

The next step consists in defining the new model
 $ M' = (W' , \belbase', \doxset'  ,\valfunct' )$
 with $W' = W$, $\doxset' = \doxset$
 and where $  \belbase'$ and $\valfunct'$
 are defined as follows.

 For every $i \in \AGT$
and for every $ w  \in W$:
\begin{align*}
\belbase'  (i, w)  &= \belbase(i,w) \cup \{ f(i,w)  \}.
\end{align*}

Moreover, for every $p \in \PROP$:
\begin{align*}
\valfunct' (p) & = \valfunct(p)  & \text{ if }p \in \mathcal{T}(M) \cup  \PROP(\varphi), \\
\valfunct' (p) & =  \doxset(i,w)   & \text{ if }p = f(i,w), \\
\valfunct' (p) & =  \emptyset   & \text{ otherwise.}
\end{align*}

It is easy  to verify that
$   \doxset'(i,w)  = \bigcap_{\alpha \in  \belbase'(i,w) }  ||\alpha ||_{ M' }$
 for all $i \in \AGT$
and for all $ w  \in W'$
and, more generally, that
 $M'$ is a finite NDM.

  By induction on the structure of $\varphi$,
we prove that, for all $w \in W$,
  ``$(M, w) \models \varphi$
  iff $(M',w) \models \varphi$''.

 The case $\varphi = p $
is immediate from the definition of $\valfunct'$.
By the induction hypothesis, we can prove
the boolean cases $\varphi = \neg \psi $
and  $\varphi = \psi_1 \wedge \psi_2$
in a straightforward manner.

Let us prove the case  $\varphi =  \expbel{i} \alpha $.

($\Rightarrow$)
Suppose $(M,w) \models  \expbel{i} \alpha$. Then,
we have $\alpha \in \belbase (i, w) $. Hence, by the definition
of $\belbase'$,  $\alpha \in \belbase' (i, w) $. Thus, $(M',w) \models  \expbel{i} \alpha$.

($\Leftarrow$)
Suppose $(M',w) \models  \expbel{i} \alpha$. Then, we
 have $\alpha \in \belbase' (i, w) $. The definition
of $\belbase'$ ensures that $\alpha \neq  f(i,w)  $, since $f(i,w) \not \in \PROP(\expbel{i} \alpha) $. Thus, $\alpha \in \belbase (i, w) $ and, consequently, $(M,w) \models  \expbel{i} \alpha$.

Let us prove the case  $\varphi =  \impbel{i} \psi $.
 $(M,w) \models   \impbel{i} \psi $ means that
 $(M,v) \models \psi$
for all $ v \in  \doxset(i,w)   $.
By induction hypothesis  and the fact that $ \doxset(i,w) =  \doxset'(i,w) $, the latter
 is equivalent to
  $(M',v) \models \psi$
for all $ v \in  \doxset'(i,w)   $.
The latter means that $(M',w) \models   \impbel{i} \psi $.

Since $M$ satisfies $\varphi$
and
  ``$(M, w) \models \varphi$
  iff $(M',w) \models \varphi$''
   for all $w \in W$,
   $M'$ satisfies $\varphi$ as well.
\end{proof}

\paragraph{Equivalence between CMABs and  NDMs}

Our third equivalence result
is between  CMABs and NDMs.

\begin{theorem}\label{eqsemantics3}
Let
 $\varphi \in \lang$.
 Then, $\varphi$
 is satisfiable for  the class
 of CMABs if and only if
 $\varphi$
 is satisfiable for the class
 of NDMs.
\end{theorem}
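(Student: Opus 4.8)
The plan is to prove the two directions of the equivalence separately, building a bridge in each direction between a CMAB and an NDM by reading off the two kinds of structure from one another.

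\medskip
\noindent\emph{From CMABs to NDMs.} Suppose $(B,\iconstraint)\models\varphi$ for some CMAB. I would build an NDM $M=(W,\belbase,\doxset,\valfunct)$ by taking $W=\iconstraint\cup\{B\}$, and for each multi-agent belief base $B'=(\belbaseset_1',\ldots,\belbaseset_n',\state')\in W$ setting $\belbase(i,B')=\belbaseset_i'$, $\valfunct(p)=\{B'\in W:p\in\state'\}$, and defining $\doxset(i,B')=\{B''\in W:B'\relstate{i}B''\}\cap\iconstraint$. The first task is to check that this is a genuine NDM: Condition C2 is exactly the consistency requirement in Definition \ref{CMAM}, and Condition C1 must be verified by showing $\doxset(i,B')=\bigcap_{\alpha\in\belbase(i,B')}\|\alpha\|_M$. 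The second, more substantive task is a truth-lemma: for every $\psi\in\lang$ and every $B'\in W$, $(M,B')\models\psi$ iff $(B',\iconstraint)\models\psi$. This goes by induction on $\psi$. The atomic and Boolean cases are immediate; the $\expbel{i}$ case is immediate from the matching definitions; the $\impbel{i}$ case uses the definition of $\doxset$ together with Definition \ref{truthcond2}, noting that the quantification over $\doxset(i,B')$ in $M$ is exactly the quantification over $\{B''\in\iconstraint:B'\relstate{i}B''\}$ in the MAB semantics. Applying the truth-lemma at $B$ gives $(M,B)\models\varphi$. One subtlety to get right here: in Definition \ref{truthcond2} the implicit-belief clause only ranges over members of $\iconstraint$, while the base $B$ itself need not lie in $\iconstraint$, so it is important that $\doxset$ intersects with $\iconstraint$ even when evaluating at $B$ — this matches the MAB semantics precisely.

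\medskip
\noindent\emph{From NDMs to CMABs.} This is the direction I expect to be the main obstacle, since from an arbitrary NDM $M=(W,\belbase,\doxset,\valfunct)$ with $(M,w_0)\models\varphi$ I must manufacture a set $\iconstraint$ of multi-agent belief bases and single out one base $B$. The natural idea is to associate to each world $w\in W$ a multi-agent belief base $B_w=(\belbase(1,w),\ldots,\belbase(n,w),\valfunct^{-1}(w))$ — but here $\valfunct^{-1}(w)=\{p:w\in\valfunct(p)\}$ may be infinite, and two distinct worlds could collapse to the same base, which would break the correspondence between $\doxset(i,w)$ and $\relstate{i}$. I would handle the identification problem by first applying the results already established — Theorems \ref{finitemodel} and \ref{eqsemantics} together with Theorem \ref{eqsemantics3}'s predecessors give that $\varphi$, if NDM-satisfiable, is satisfiable in a \emph{finite} NDM, and then invoke the construction in the proof of Theorem \ref{eqsemantics} (adding a fresh identifier atom $f(i,w)$ to each belief base, valued precisely at $\doxset(i,w)$) so that worlds become pairwise distinguishable and, more importantly, so that $\doxset(i,w)$ is definable as $\bigcap_{\alpha\in\belbase(i,w)}\|\alpha\|$. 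Concretely I would take $M$ to be the finite NDM produced there, set $\iconstraint=\{B_w:w\in W\}$ and $B=B_{w_0}$, and then prove $B_w\relstate{i}B_v$ iff $v\in\doxset(i,w)$: the left-to-right uses that $\belbase(i,w)$ contains $f(i,w)$ with $\valfunct^{-1}$ carrying $B_v\models f(i,w)$ only when $v\in\doxset(i,w)$, and the right-to-left uses C1. A truth-lemma $(M,w)\models\psi$ iff $(B_w,\iconstraint)\models\psi$ then follows by the same induction as above. Finally I would verify that $(B,\iconstraint)$ is a \emph{consistent} MAB: given $B_w\in\iconstraint\cup\{B\}$, Condition C2 of the NDM gives some $v\in\doxset(i,w)$, whence $B_v\in\iconstraint$ and $B_w\relstate{i}B_v$, which is exactly Definition \ref{CMAM}.

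\medskip
\noindent The infiniteness of $\valfunct^{-1}(w)$ and the possible clash of worlds are the real friction points; both are dissolved by routing through the finite-model and NDM-normalisation results proved earlier rather than attacking an arbitrary NDM directly. Everything else is bookkeeping: matching the four semantic clauses of Definitions \ref{truthcond1}, \ref{truthcond2} against those of Definition \ref{modeldef}, and checking the two NDM conditions C1 and C2 correspond respectively to the defining equation of $\doxset$ in terms of belief bases and to the CMAB consistency clause.
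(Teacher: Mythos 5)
Your CMAB-to-NDM direction contains the one step that would actually fail. With your definition $\doxset(i,B')=\{B''\in W : B'\relstate{i}B''\}\cap\iconstraint$, Condition C1 is in general false: since for every $\alpha\in\langminus_\logic$ the atomic and $\expbel{}$ clauses match definitionally, we get $(M,B'')\models\alpha$ iff $B''\models\alpha$, hence $\bigcap_{\alpha\in\belbase(i,B')}||\alpha||_M=\{B''\in\iconstraint\cup\{B\} : B'\relstate{i}B''\}$. So whenever $B\notin\iconstraint$ (which Definition \ref{MAM} explicitly allows) and the actual base $B$ happens to satisfy every formula in $\belbaseset_i'$, the world corresponding to $B$ lies in the intersection but not in your $\doxset(i,B')$, and your structure is only a quasi-NDM, not an NDM. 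This is not mere bookkeeping: the very subtlety you flag — that the $\impbel{i}$ clause of Definition \ref{truthcond2} quantifies only over $\iconstraint$ while $B$ need not belong to $\iconstraint$ — means you cannot at such points have both C1 and a clause-for-clause match with the MAB semantics; the paper's own construction makes the opposite choice, defining $\doxset(i,w_{B'})=\bigcap_{\alpha\in\belbase(i,w_{B'})}\{w_{B''}\in W : B''\models\alpha\}$ so that C1 holds by fiat. The clean repair inside your own proposal is to stop claiming C1: your structure satisfies C1$^*$ and C2 (the latter is exactly Definition \ref{CMAM}), your truth lemma goes through verbatim, so $\varphi$ is satisfiable for the class of quasi-NDMs, and then Theorems \ref{finitemodel} and \ref{eqsemantics} convert this into satisfiability for (finite) NDMs — the same pipeline you already invoke in the other direction. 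As written, however, the announced verification of C1 cannot be discharged.

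Your NDM-to-CMAB direction is sound and is essentially the paper's argument: the paper associates with each world $u$ of a \emph{non-redundant} NDM the base $B^u=(\belbase(1,u),\ldots,\belbase(n,u),\valfunct^{-1}(u))$ and takes $\iconstraint=\{B^u : u\in W'\}$, using non-redundancy to get a one-to-one correspondence between worlds and bases. Your detour through Theorems \ref{finitemodel} and \ref{eqsemantics}, which hands you the identifier atoms $f(i,w)$ with $\valfunct'(f(i,w))=\doxset(i,w)$, is a legitimate and in fact more explicit way of securing that injectivity and the equivalence $B_w\relstate{i}B_v$ iff $v\in\doxset(i,w)$, and your verification of the consistency clause from C2 is exactly the paper's. (The worry about $\valfunct^{-1}(w)$ being infinite is not a genuine obstacle, since Definition \ref{MAB} imposes no finiteness on $\state$, but routing through finite models is harmless.)
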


\begin{proof}
We first prove the left-to-right direction.
Let $(B, \iconstraint)$
be a CMAB with
$ B = ( \belbaseset_1, \ldots, \belbaseset_n,  \state )$
and
such that $(B, \iconstraint) \models \varphi$.
We define the structure
 $ M = (W , \belbase,\doxset  ,\valfunct )$ as follows:
\begin{itemize}
\item $W = \{ w_{ B'}  \suchthat B' \in  \iconstraint  \cup \{B\}\}$,

 \item for every $i \in \AGT$ and
for every $w_{ B'} \in W$,
if $ B' = ( \belbaseset_1', \ldots, \belbaseset_n',  \state' )$ then
$ \belbase(i,w_{ B'})  = B_i'$,

\item for every $i \in \AGT$ and
for every $w_{ B'} \in W$, $  \doxset(i,w_{ B'})  = \bigcap_{\alpha \in  \belbase(i,w_{ B'}) }  \{  w_{ B''}\in W \suchthat B''\models \alpha \} $,

\item for every $p \in \ATM$, $\valfunct (p) = \{ w_{ B'}  \in W \suchthat B' \models p \} $.
\end{itemize}
One can show that $M$ so defined is a NDM.
Moreover, by induction on the structure of $\varphi$,
one can prove that,
for all $w_{ B'} \in W$,
$M, w_{ B'} \models \varphi$
iff $(B', \iconstraint)\models \varphi$.
Thus,
$(M, w_B ) \models \varphi$.

%We have shown that
%``if $\varphi$
% is satisfiable for  the class
% of CMABs then
% $\varphi$
% is satisfiable for the class
% of NDMs''. From the latter, by Theorems \ref{finitemodel} and \ref{eqsemantics},
%it follows that
%``if $\varphi$
% is satisfiable for  the class
% of CMABs then
% $\varphi$
% is satisfiable for the class
% of \emph{finite} NDMs''.

We now prove the right-to-left direction.
%Before going into the proof,
%let us say that a finite  NDM
%$ M = (W , \belbase,\doxset  ,\valfunct )$
%is non-redundant iff there are no $w,v \in W$
%such that (i) $\valfunct^{-1}(w) = \valfunct^{-1}(v)$,
%and (ii) for all $i \in \AGT$, $\belbase (i,w) = \belbase (i,v)$.
%We can state the following lemma:
%\begin{lemma}\label{lemmanonred}
%If $\varphi$
% is satisfiable for  the class
% of finite NDMs then
% $\varphi$
% is satisfiable for the class
% of non-redundant finite NDMs.
%\end{lemma}
Let  $ M = (W , \belbase,\doxset  ,\valfunct )$
be a  NDM and let $w$ be a world in $W$
such that $(M,w ) \models \varphi$.
Let us say that a  NDM
$ M = (W , \belbase,\doxset  ,\valfunct )$
is non-redundant iff there are no $w,v \in W$
such that $\valfunct^{-1}(w) = \valfunct^{-1}(v)$,
and, for all $i \in \AGT$, $\belbase (i,w) = \belbase (i,v)$.
It is straightforward to show that
if $\varphi$
 is satisfiable for  the class
 of NDMs then
 $\varphi$
 is satisfiable for the class
 of non-redundant NDMs. Thus,
 from the initial model $M$, we can find a non-redundant NDM
$ M'= (W', \belbase',\doxset'  ,\valfunct' )$
and $v\in W'$
such that $(M',v) \models \varphi$.
For every $u \in W'$
we define $ B^u = ( \belbaseset_1^u, \ldots, \belbaseset_n^u,  \state^u )$ such that
 $\belbaseset_i^u =  \belbase(i,u) $  for every $i \in \AGT$
and $ \state^u =  \valfunct^{-1} (u)$. Moreover, we define the context
 $\iconstraint = \{B^u \suchthat u \in W'     \}$.
 One can show that, for every $B^u \in \iconstraint$, $(B^u, \iconstraint)$ is a CMAB.
The fact that $M'$
is non-redundant is essential to guarantee that there is a one-to-one correspondence between
$W'$ and $\iconstraint $.
By induction on the structure of $\varphi$,
one can prove that,
for all $B^u \in \iconstraint $,
$(B^u, \iconstraint)\models \varphi$
iff $M', u\models \varphi$.
Thus,
$B^v \models \varphi$.

\end{proof}

\section{Axiomatics and decidability }\label{Axiomatics}

This section is devoted to provide
an axiomatization and a decidability result for $\logic$. To this aim,
we first provide a formal definition
of this logic.

\begin{definition}
We define $\logic $  to be the extension of classical propositional logic given by the following axioms and rule of inference:
\begin{align}
& ( \impbel{i} \varphi \wedge  \impbel{i} (\varphi \rightarrow  \psi) ) \rightarrow \impbel{i} \psi  \tagLabel{K$_{\impbel{i}}$}{ax:KK}\\
&  \neg (\impbel{i} \varphi \wedge \impbel{i} \neg \varphi ) \tagLabel{D$_{\impbel{i}}$}{ax:KD}\\
%&  \impbel{} \varphi \rightarrow  \impbel{} \impbel{} \varphi  \tagLabel{4$_{\impbel{}}$}{ax:K4}\\
%&  \neg \impbel{} \varphi \rightarrow  \impbel{} \neg \impbel{} \varphi
%\tagLabel{5$_{\impbel{}}$}{ax:K5}\\
& \expbel{i} \alpha   \rightarrow \impbel{i} \alpha  \tagLabel{Int$_{\expbel{i},\impbel{i}}$}{ax:Int1}\\
%& \expbel{i} \alpha   \rightarrow \aware{i} \alpha  \tagLabel{Int$_{\expbel{i},\aware{i}}$}{ax:Int2}\\
%& \neg \expbel{i} \bot     \tagLabel{Rat$_{\expbel{i}}$}{ax:Rat}\\
%& \aware{i} \alpha  \rightarrow \aware{i} p \text{ if  } p \in \PROP(\alpha)  \tagLabel{Term$_{\aware{i}}$}{ax:Lang}\\
%& \aware{} \varphi   \rightarrow    \impbel{}   \aware{} \varphi  \tagLabel{PIntr$_{\aware{}}$}{ax:pintraw}\\
%& \neg  \aware{} \varphi   \rightarrow    \impbel{} \neg  \aware{} \varphi  \tagLabel{NIntr$_{\aware{}}$}{ax:nintraw}\\
%& \expbel{} \varphi   \rightarrow    \impbel{}  \expbel{}\varphi  \tagLabel{PIntr$_{\expbel{}}$}{ax:pintrbel}\\
%& \neg \expbel{}  \varphi   \rightarrow  \impbel{}   \neg \expbel{}\varphi  \tagLabel{NIntr$_{\expbel{}}$}{ax:nintrbel}\\
 &     \frac{   \varphi }{   \impbel{i}  \varphi    }
                                            \tagLabel{Nec$_{\impbel{i}  }$}{ax:NecK}
\end{align}
\end{definition}
We denote that $\varphi $
is derivable in $\logic$
by $\vdash_\logic \varphi$. We say that $\varphi$
is $\logic$-consistent if $\not \vdash_\logic \neg \varphi$.

The logic $\logic $
includes the principles of
system KD for the implicit belief operator $ \impbel{i} $
as well as an axiom \ref{ax:Int1} relating explicit belief
with implicit belief. Note that there is no consensus in the literature about introspection for implicit belief. For instance, in his seminal work on the logics of knowledge and belief \cite{Hintikka},
Hintikka only assumed positive introspection for belief (Axiom 4) and rejected negative introspection (Axiom 5). Other logicians such as \cite{Jones2015} have argued against the use of both positive and negative introspection axioms for belief.
Nonetheless, all approaches unanimously assume that a reasonable notion of implicit belief should satisfy Axioms K and D. In this sense, system KD can be conceived as the minimal logic of implicit belief. On this point, see
\cite{BanerjeeDubois2014}.

To prove our main completeness result,
we first prove a theorem
about soundness
and completeness of $\logic$
for the class of quasi-NDMs.

\paragraph{Soundness and completeness for quasi-NDMs}
To prove completeness of $\logic$
for the class of quasi-NDMs, we use a canonical model argument.

We consider  maximally $\logic$-consistent sets  of formulas in  $\lang $
(MCSs). The following proposition
specifies some usual properties of MCSs.
%(cf. \cite[Proposition 4.16]{blackburn01modal}).
 \begin{proposition}  \label{propmcs}
Let $ \Gamma $  be a MCS and let $\varphi, \psi \in \lang$.  Then:
\begin{itemize}

\item if $\varphi, \varphi \rightarrow \psi \in \Gamma$
then $\psi \in \Gamma$;

\item $ \varphi \in \Gamma$
or $\neg \varphi \in \Gamma$;

\item $\varphi \vee \psi  \in \Gamma$
iff $ \varphi \in \Gamma$
or $\psi \in \Gamma$.

\end{itemize}

  \end{proposition}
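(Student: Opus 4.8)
The plan is to derive all three items from one auxiliary fact: every MCS $\Gamma$ is closed under $\logic$-derivability, i.e.\ if $\Gamma \vdash_\logic \chi$ then $\chi \in \Gamma$. Here I read $\logic$-consistency and derivability from a set in the standard way, so that the deduction theorem for the propositional layer is available (a finite conjunction of members of $\Gamma$ entails $\chi$ iff $\Gamma \vdash_\logic \chi$, necessitation being applicable only to theorems); this is legitimate because $\logic$ is defined as an axiomatic extension of classical propositional logic whose only propositional rule is modus ponens. To prove the auxiliary fact, I would argue by contradiction: if $\Gamma \vdash_\logic \chi$ but $\chi \notin \Gamma$, then by maximality of $\Gamma$ the set $\Gamma \cup \{\chi\}$ is $\logic$-inconsistent, so by the deduction theorem $\Gamma \vdash_\logic \neg\chi$; together with $\Gamma \vdash_\logic \chi$ this yields $\Gamma \vdash_\logic \bot$, contradicting the $\logic$-consistency of $\Gamma$. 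Hence $\chi \in \Gamma$.

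Granting closure under derivability, the first item is immediate: if $\varphi \in \Gamma$ and $\varphi \rightarrow \psi \in \Gamma$, then $\Gamma \vdash_\logic \psi$ by modus ponens, so $\psi \in \Gamma$. For the second item, suppose $\varphi \notin \Gamma$; by maximality $\Gamma \cup \{\varphi\}$ is $\logic$-inconsistent, so by the deduction theorem $\Gamma \vdash_\logic \neg\varphi$, whence $\neg\varphi \in \Gamma$ by closure. (The complementary fact that $\Gamma$ cannot contain both $\varphi$ and $\neg\varphi$ is not part of the statement, but follows at once from $\logic$-consistency of $\Gamma$.)

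For the third item, the right-to-left direction uses that $\varphi \rightarrow (\varphi \vee \psi)$ and $\psi \rightarrow (\varphi \vee \psi)$ are propositional theorems of $\logic$; combining either with the first item shows that $\varphi \in \Gamma$ or $\psi \in \Gamma$ already forces $\varphi \vee \psi \in \Gamma$. For the left-to-right direction, assume $\varphi \vee \psi \in \Gamma$ but $\varphi \notin \Gamma$ and $\psi \notin \Gamma$; by the second item $\neg\varphi \in \Gamma$ and $\neg\psi \in \Gamma$, so $\Gamma \vdash_\logic \neg(\varphi \vee \psi)$ by propositional reasoning, and closure would place $\neg(\varphi \vee \psi)$ in $\Gamma$ alongside $\varphi \vee \psi$, contradicting $\logic$-consistency. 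No step here is a genuine obstacle; the only point needing a word of care is the appeal to the deduction theorem for the propositional fragment, which is exactly what the definition of $\logic$ as a modus-ponens-based extension of classical logic licenses, and which is unaffected by the necessitation rule \ref{ax:NecK} since that rule is never applied to the auxiliary hypothesis in the derivations above.
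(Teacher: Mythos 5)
Your proof is correct: the paper states this proposition without proof, treating it as a standard property of maximally consistent sets, and your argument (closure of $\Gamma$ under $\vdash_\logic$ via maximality plus the propositional deduction theorem, then the three items by modus ponens and classical tautologies) is exactly the standard one the paper implicitly relies on, e.g.\ as in Blackburn et al.\ \cite{Bla01}. Your explicit remark that the deduction theorem is unproblematic because the rule \ref{ax:NecK} is applied only to theorems is the right point of care and is consistent with how the paper uses $\logic$-consistency.
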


     The following is
the Lindenbaum's lemma for our logic.
Its proof is standard (cf. Lemma 4.17 in \cite{Bla01})
and we omit it.

\begin{lemma}  \label{lindenb}
Let $ \Delta $  be
a $\logic$-consistent set of formulas.
Then, there exists a  MCS $\Gamma$ such that $\Delta \subseteq \Gamma$.
  \end{lemma}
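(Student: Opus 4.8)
Let $\Delta$ be a $\logic$-consistent set of formulas. Then there exists a MCS $\Gamma$ such that $\Delta \subseteq \Gamma$.

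The plan is to run the standard Lindenbaum construction, building $\Gamma$ as the union of an increasing chain of $\logic$-consistent sets obtained by deciding one formula at a time. First I would fix an enumeration $\varphi_0, \varphi_1, \varphi_2, \ldots$ of the (countable) language $\lang$; this is possible since $\PROP$ is countable and $\AGT$ is finite, so $\lang$ is a countable set of finite strings. Then I would define $\Gamma_0 = \Delta$ and, inductively, set $\Gamma_{k+1} = \Gamma_k \cup \{\varphi_k\}$ if this set is $\logic$-consistent, and $\Gamma_{k+1} = \Gamma_k \cup \{\neg \varphi_k\}$ otherwise. Finally I would let $\Gamma = \bigcup_{k \geq 0} \Gamma_k$.

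The key steps are then: (i) show each $\Gamma_k$ is $\logic$-consistent, by induction on $k$ — the base case is the hypothesis, and the inductive step uses the elementary fact that if $\Gamma_k$ is consistent then at least one of $\Gamma_k \cup \{\varphi_k\}$ and $\Gamma_k \cup \{\neg \varphi_k\}$ is consistent (otherwise $\Gamma_k \vdash_\logic \neg\varphi_k$ and $\Gamma_k \vdash_\logic \varphi_k$, contradicting consistency of $\Gamma_k$, using that $\logic$ extends classical propositional logic); (ii) show $\Gamma$ is $\logic$-consistent — here I would use that derivations are finite, so any derivation of a contradiction from $\Gamma$ uses only finitely many premises, hence lies in some $\Gamma_k$, contradicting (i); and (iii) show $\Gamma$ is maximal — for every $\varphi$, we have $\varphi = \varphi_k$ for some $k$, and by construction either $\varphi_k \in \Gamma_{k+1} \subseteq \Gamma$ or $\neg\varphi_k \in \Gamma_{k+1} \subseteq \Gamma$, so $\Gamma$ cannot be properly extended while remaining consistent. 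Clearly $\Delta = \Gamma_0 \subseteq \Gamma$.

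There is no real obstacle here: the argument is entirely routine and identical to the construction in Lemma 4.17 of \cite{Bla01}. The only points requiring a word of care are that the language is countable (so the enumeration exists) and that $\logic$ is closed under the needed propositional reasoning — both of which are immediate from the setup, since $\logic$ is defined as an extension of classical propositional logic. For this reason the proof is standard and we omit the details.
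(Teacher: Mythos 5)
Your construction is the standard Lindenbaum argument (enumerate the countable language, decide each formula while preserving consistency, take the union, and use finiteness of derivations for consistency of the union), which is precisely the standard proof the paper appeals to by citing Lemma 4.17 of \cite{Bla01} and omitting the details. The proposal is correct and matches the paper's intended proof.
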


  Let the canonical quasi-NDM model be the tuple
  $ M = (W^c , \belbase^c, \doxset^c, \valfunct^c)$ such that:
 \begin{itemize}
  \item $W^c$ is set of all MCSs;

      \item for all $w \in  W^c$, for all $i \in  \AGT$ and for all  $\alpha \in \langminus_\logic $,
    $ \alpha \in \belbase^c(i,w) $ iff $\expbel{i} \alpha \in w$;

    \item for all $w,v \in  W^c$ and for all $i \in  \AGT$,
    $ v \in \doxset^c(i,w) $ iff, for all $\varphi \in \lang $,
    if $\impbel{i} \varphi \in w  $ then $ \varphi \in v  $;

            \item for all $w  \in  W^c$ and  for all $p \in  \PROP$,
    $ w \in \valfunct^c (p) $ iff $p \in w $.

  \end{itemize}

      The next step in the proof consists in stating the following existence lemma.
    The proof is again standard  (cf. Lemma 4.20 in \cite{Bla01}) and we omit it.
   \begin{lemma} \label{existlemma}
   Let $ \varphi \in \lang$ and let $w \in W^c$.
   Then, if $ \impbelposs{i} \varphi \in w$
   then there exists $v \in\doxset^c(i,w)$
   such that $ \varphi \in v$.
   \end{lemma}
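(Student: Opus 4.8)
The plan is to use the standard maximal-consistent-set construction underlying existence lemmas for normal modal logics. Given $w \in W^c$ with $\impbelposs{i} \varphi \in w$, I would first form the set
\[
\Delta \ = \ \{\varphi\} \cup \{\psi \in \lang : \impbel{i} \psi \in w\},
\]
and show that it is $\logic$-consistent. Once this is done, Lindenbaum's Lemma (Lemma \ref{lindenb}) provides a MCS $v$ with $\Delta \subseteq v$. By construction $\varphi \in v$, and whenever $\impbel{i} \psi \in w$ we have $\psi \in v$; this is exactly the defining condition for $v \in \doxset^c(i,w)$ in the canonical model. Hence $v$ is the world we need.

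The heart of the argument is therefore the consistency of $\Delta$. Suppose, for contradiction, that $\Delta$ is $\logic$-inconsistent. Since derivability in a Hilbert system is finitary, there are finitely many formulas $\psi_1, \ldots, \psi_k$ with $\impbel{i} \psi_j \in w$ for each $j$ such that $\vdash_\logic (\psi_1 \wedge \cdots \wedge \psi_k) \rightarrow \neg \varphi$. Applying the necessitation rule \ref{ax:NecK} gives $\vdash_\logic \impbel{i}\big((\psi_1 \wedge \cdots \wedge \psi_k) \rightarrow \neg \varphi\big)$, and then repeated use of the K axiom \ref{ax:KK} for $\impbel{i}$, together with propositional reasoning carried out under the scope of $\impbel{i}$, yields $\vdash_\logic (\impbel{i} \psi_1 \wedge \cdots \wedge \impbel{i} \psi_k) \rightarrow \impbel{i} \neg \varphi$. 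Since each $\impbel{i} \psi_j$ belongs to the MCS $w$, Proposition \ref{propmcs} gives $\impbel{i} \neg \varphi \in w$. But $\impbelposs{i} \varphi \in w$ is by definition $\neg \impbel{i} \neg \varphi \in w$, contradicting the consistency of $w$. Thus $\Delta$ is consistent, completing the proof.

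The only mildly delicate point — the main obstacle, if there is one — is the normality bookkeeping: verifying that $\impbel{i}$ can be pushed across a finite conjunction and distributed over an implication using only the K axiom and the necessitation rule. This is entirely routine (it is precisely the derivation behind the existence lemma for minimal normal modal logics; cf. Lemma 4.20 in \cite{Bla01}), so no genuine difficulty arises. Note also that the explicit-belief operator $\expbel{i}$ and the context/quasi-NDM machinery play no role here, since $\doxset^c(i,w)$ is defined purely in terms of the $\impbel{i}$ operators and the MCSs.
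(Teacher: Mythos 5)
Your proof is correct and is exactly the standard canonical-model existence-lemma argument that the paper itself invokes (it omits the proof, citing Lemma 4.20 of Blackburn et al.): consistency of $\{\varphi\}\cup\{\psi : \impbel{i}\psi\in w\}$ via K, necessitation and propositional reasoning, then Lindenbaum's Lemma. No gap; nothing differs in substance from the paper's intended proof.
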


   Then, we prove the following truth lemma.
     \begin{lemma}\label{truthlemma}
Let $ \varphi \in \lang$ and let $w \in W^c$.
Then, $M^c, w \models \varphi $ iff $ \varphi \in w $.
  \end{lemma}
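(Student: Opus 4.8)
The plan is to prove the truth lemma by induction on the structure of $\varphi$, following the standard template for canonical model arguments (cf. Lemma 4.21 in \cite{Bla01}), but taking care of the two non-standard ingredients: the explicit belief operator $\expbel{i}$ and the fact that our canonical model must be verified to be a genuine quasi-NDM. As usual, I would work with $\impbelposs{i}$ as the primitive modality and treat $\impbel{i}$ as defined, so that only the diamond case needs the existence lemma.

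First I would dispatch the base and Boolean cases. For $\varphi = p$, the equivalence $M^c, w \models p$ iff $p \in w$ is immediate from the definition of $\valfunct^c$. The cases $\varphi = \neg \psi$ and $\varphi = \psi_1 \wedge \psi_2$ follow from the induction hypothesis together with the MCS properties collected in Proposition~\ref{propmcs} (maximality and consistency give $\neg\psi \in w$ iff $\psi \notin w$, and similarly for conjunction). Next, the case $\varphi = \expbel{i}\alpha$: by the truth condition for $\expbel{i}$ in a quasi-NDM, $M^c, w \models \expbel{i}\alpha$ iff $\alpha \in \belbase^c(i,w)$, and by the definition of $\belbase^c$ this holds iff $\expbel{i}\alpha \in w$ — so this case is essentially definitional and needs no induction hypothesis.

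The main case is $\varphi = \impbelposs{i}\psi$. For the left-to-right direction, suppose $M^c, w \models \impbelposs{i}\psi$. Then there is $v \in \doxset^c(i,w)$ with $M^c, v \models \psi$, so by the induction hypothesis $\psi \in v$. I must show $\impbelposs{i}\psi \in w$: if not, then by maximality $\impbel{i}\neg\psi \in w$, whence by the definition of $\doxset^c(i,w)$ we get $\neg\psi \in v$, contradicting $\psi \in v$ and the consistency of $v$. For the right-to-left direction, suppose $\impbelposs{i}\psi \in w$. By the existence Lemma~\ref{existlemma}, there is $v \in \doxset^c(i,w)$ with $\psi \in v$; by the induction hypothesis $M^c, v \models \psi$, and hence $M^c, w \models \impbelposs{i}\psi$.

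The step I expect to require the most care is not the induction itself but making sure it is legitimately an induction over a quasi-NDM: one must check that $M^c$ as defined actually satisfies conditions C2 and C1$^*$. Condition C2 follows from axiom \ref{ax:KD} via the existence lemma (from $\impbel{i}\top \in w$ and consistency of $w$ one obtains $\impbelposs{i}\top \in w$, so $\doxset^c(i,w) \neq \emptyset$). For C1$^*$, i.e. $\doxset^c(i,w) \subseteq \bigcap_{\alpha \in \belbase^c(i,w)} ||\alpha||_{M^c}$, one takes $v \in \doxset^c(i,w)$ and $\alpha \in \belbase^c(i,w)$; then $\expbel{i}\alpha \in w$, so by axiom \ref{ax:Int1} and Proposition~\ref{propmcs} we get $\impbel{i}\alpha \in w$, hence $\alpha \in v$ by definition of $\doxset^c(i,w)$, and so $v \in ||\alpha||_{M^c}$ by the truth lemma restricted to the subformula $\alpha$ — which is fine since the truth lemma is being proved by induction and $\alpha$ is structurally simpler, or alternatively one verifies C1$^*$ and the truth lemma in a joint induction. (Strictly, it is cleanest to prove the truth lemma for $\alpha \in \langminus_\logic$ first, establish C1$^*$ and C2, and then extend the truth lemma to all of $\lang$; I would note this ordering explicitly.) Once C1$^*$, C2 and the truth lemma are in place, the inductive argument above goes through, completing the proof.
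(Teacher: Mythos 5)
Your proof is correct and follows essentially the same route as the paper: induction on $\varphi$ using the MCS properties (Proposition \ref{propmcs}) for the Boolean cases and the existence Lemma \ref{existlemma} for the modal case, with the $\expbel{i}\alpha$ case being purely definitional via $\belbase^c$. One remark: your worry about first securing C1$^*$ and C2 is unnecessary for the truth lemma itself, since the satisfaction clauses never invoke those conditions; the paper accordingly verifies them only afterwards (Proposition \ref{goodcon2}), using the already-established truth lemma in exactly the way your C1$^*$ argument does, so no joint or layered induction is needed.
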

  \begin{proof}
  The proof is by induction on the structure of the formula.
  The cases with
  $\varphi$
  atomic, Boolean,
  and of the form
 $  \impbel{i} \psi$
  are provable in the standard way
  by means of Proposition \ref{propmcs} and Lemma  \ref{existlemma}
   (cf. Lemma 4.21 in  \cite{Bla01}).
  The proof for the case  $\varphi =  \expbel{i} \alpha$
  goes as follows: $\expbel{i} \alpha \in w$ iff
      $ \alpha \in \belbase^c(i,w) $ iff $M^c, w \models \expbel{i} \alpha $.
     \end{proof}

   The last step consists
   in proving that the canonical
   model belongs to the class $\quasinotmod $.
    \begin{proposition}\label{goodcon2}
       $M^c $ is a quasi-NDM.
    \end{proposition}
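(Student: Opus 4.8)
The plan is to verify that the tuple $M^c = (W^c, \belbase^c, \doxset^c, \valfunct^c)$ satisfies the two defining conditions of a quasi-NDM in Definition \ref{QNDM}. The type constraints are immediate from the construction ($\belbase^c(i,w) \subseteq \langminus_\logic$, $\doxset^c(i,w) \subseteq W^c$ and $\valfunct^c(p) \subseteq W^c$), so the real work is to establish Condition C1$^*$ and Condition C2 for every $i \in \AGT$ and every $w \in W^c$.

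For Condition C1$^*$, I would fix $i$, $w$, an alternative $v \in \doxset^c(i,w)$ and a formula $\alpha \in \belbase^c(i,w)$, and show $v \in ||\alpha||_{M^c}$. By definition of $\belbase^c$ we have $\expbel{i}\alpha \in w$; since $w$ is an MCS it contains the instance $\expbel{i}\alpha \rightarrow \impbel{i}\alpha$ of axiom \ref{ax:Int1} and is closed under modus ponens (Proposition \ref{propmcs}), hence $\impbel{i}\alpha \in w$. The definition of $\doxset^c$ then gives $\alpha \in v$, and the Truth Lemma (Lemma \ref{truthlemma}) yields $M^c, v \models \alpha$, that is $v \in ||\alpha||_{M^c}$. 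Since $\alpha$ was an arbitrary element of $\belbase^c(i,w)$, this gives $\doxset^c(i,w) \subseteq \bigcap_{\alpha \in \belbase^c(i,w)} ||\alpha||_{M^c}$.

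For Condition C2, I would first check that $\impbelposs{i}\top$ is $\logic$-provable: necessitation (\ref{ax:NecK}) applied to the theorem $\neg\neg\top$ gives $\vdash_\logic \impbel{i}\neg\neg\top$, and combined with the instance $\neg(\impbel{i}\neg\top \wedge \impbel{i}\neg\neg\top)$ of axiom \ref{ax:KD} this yields $\vdash_\logic \neg\impbel{i}\neg\top$, that is $\vdash_\logic \impbelposs{i}\top$. Hence $\impbelposs{i}\top \in w$ for every $w \in W^c$, and the Existence Lemma (Lemma \ref{existlemma}), applied with $\varphi = \top$, produces some $v \in \doxset^c(i,w)$; thus $\doxset^c(i,w) \neq \emptyset$.

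I do not expect a genuine obstacle: the argument is routine canonical-model bookkeeping, and the only points needing care are invoking axiom \ref{ax:Int1} at the right place for C1$^*$ and axioms \ref{ax:KD} and \ref{ax:NecK} for C2, while relying on the already-established Truth and Existence Lemmas rather than re-proving them. Should one wish to avoid the Existence Lemma in the C2 step, an equivalent option is to show directly that $\{\psi \suchthat \impbel{i}\psi \in w\}$ is $\logic$-consistent (propagate any derivation $\vdash_\logic \neg(\psi_1 \wedge \cdots \wedge \psi_k)$ through $\impbel{i}$ using axiom \ref{ax:KK} and necessitation, then contradict axiom \ref{ax:KD}; the degenerate case $k = 0$ is ruled out because $\logic$ is consistent), and then extend this set to an MCS $v$ via Lindenbaum's Lemma (Lemma \ref{lindenb}), which by construction lies in $\doxset^c(i,w)$.
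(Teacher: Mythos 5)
Your proposal is correct and takes essentially the same route as the paper: for Condition C1$^*$ you use exactly the paper's chain (definition of $\belbase^c$, Axiom \ref{ax:Int1} with MCS closure under modus ponens, the definition of $\doxset^c$, then Lemma \ref{truthlemma}). For Condition C2 the paper merely remarks that it follows easily from Axiom \ref{ax:KD}; your derivation of $\vdash_\logic \impbelposs{i}\top$ via \ref{ax:NecK} and \ref{ax:KD} followed by the Existence Lemma (or the alternative Lindenbaum argument) is just the standard way of filling in that step, so no genuinely different idea is involved.
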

        \begin{proof}
    Thanks to Axiom (\ref{ax:KD}),
     it is easy to prove that $M^c$
     satisfies Condition C2 in Definition \ref{modeldef}.

     Let us prove that it satisfies
    Condition C1$^*$
    in Definition  \ref{QNDM}.
    To this aim, we just need to prove that
    if $\alpha \in \belbase^c(i,w) $
    then
  $   \doxset^c(i,w)  \subseteq   ||\alpha ||_{M^c}$.
  Suppose  $\alpha \in \belbase^c(i,w) $. Thus,
   $\expbel{i} \alpha \in w$.
   Hence,
by Axiom (\ref{ax:Int1})
and Proposition \ref{propmcs}, $\impbel{i} \alpha \in w$.
By the definition
of $M^c$,
it follows that, for all $v \in  \doxset^c(i,w) $,
$\alpha \in v$.
Thus,
by
Lemma
\ref{truthlemma},
for all $v \in  \doxset^c(i,w) $,
$(M^c,v) \models \alpha$.
The latter means that  $   \doxset^c(i,w)  \subseteq   ||\alpha ||_{M^c}$.
        \end{proof}

The following is
our first intermediate result.
        \begin{theorem}\label{complete1}
The logic $\logic$ is sound and complete for the class of quasi-NDMs.
\end{theorem}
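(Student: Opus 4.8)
The plan is to prove the two halves separately: soundness by a direct semantic check on the class $\quasinotmod$, and completeness by assembling the canonical model machinery already developed above.

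For soundness I would verify that each axiom schema is valid on every quasi-NDM and that the rule \ref{ax:NecK} preserves validity on this class; closure under modus ponens and the propositional tautologies is immediate. Axiom \ref{ax:KK} and the rule \ref{ax:NecK} hold because $\impbel{i}$ is interpreted as a box over the relation $v \in \doxset(i,w)$, so the usual normal-modal-logic reasoning applies verbatim. Axiom \ref{ax:KD} follows from Condition C2: since $\doxset(i,w)\neq\emptyset$, picking any $v\in\doxset(i,w)$ shows that $\impbel{i}\varphi$ and $\impbel{i}\neg\varphi$ cannot both hold at $w$. Axiom \ref{ax:Int1} is exactly where Condition C1$^*$ does its work: if $(M,w)\models\expbel{i}\alpha$ then $\alpha\in\belbase(i,w)$, hence $\doxset(i,w)\subseteq \bigcap_{\beta\in\belbase(i,w)}||\beta||_M\subseteq ||\alpha||_M$, so $(M,v)\models\alpha$ for every $v\in\doxset(i,w)$, i.e. $(M,w)\models\impbel{i}\alpha$. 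This establishes that $\vdash_\logic\varphi$ implies $\varphi$ is valid for the class of quasi-NDMs.

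For completeness I would show that every $\logic$-consistent formula is satisfiable in a quasi-NDM, which is equivalent to the required implication ``valid for quasi-NDMs $\Rightarrow$ derivable'' via the classical-propositional base. Let $\varphi$ be $\logic$-consistent. By Lemma \ref{lindenb} there is an MCS $\Gamma$ with $\varphi\in\Gamma$, so $\Gamma$ is a world of the canonical model $M^c = (W^c,\belbase^c,\doxset^c,\valfunct^c)$. By Proposition \ref{goodcon2}, $M^c\in\quasinotmod$, and by the truth lemma (Lemma \ref{truthlemma}), $(M^c,\Gamma)\models\psi$ iff $\psi\in\Gamma$ for all $\psi\in\lang$; in particular $(M^c,\Gamma)\models\varphi$. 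Hence $\varphi$ is satisfiable for the class of quasi-NDMs. Combining the two directions gives the theorem.

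Since the substantive ingredients are already in place — the existence lemma (Lemma \ref{existlemma}) needed for the $\impbelposs{i}$ step inside the truth lemma, the truth lemma itself, and Proposition \ref{goodcon2} verifying C1$^*$ and C2 for $M^c$ — the only remaining work is the routine soundness check. I do not expect a genuine obstacle here; the one point worth stating explicitly is that the frame conditions C1$^*$ and C2 are calibrated precisely to validate \ref{ax:Int1} and \ref{ax:KD} respectively, which the soundness argument above makes transparent.
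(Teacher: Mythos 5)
Your proposal is correct and follows essentially the same route as the paper: soundness by a routine check (which you spell out, correctly tying \ref{ax:Int1} to C1$^*$ and \ref{ax:KD} to C2), and completeness via Lindenbaum's lemma, the truth lemma, and Proposition \ref{goodcon2} applied to the canonical model. No gaps.
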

\begin{proof}
As for soundness,
it is routine to check that the axioms of $\logic $
are all valid  for the class of quasi-NDMs
and that the rule of inference (\ref{ax:NecK}) preserves validity.

As for completeness, suppose that $\varphi$
   is a $\logic$-consistent formula in $\lang$. By Lemma
    \ref{lindenb}, there exists  $w \in W^c$
   such that $\varphi \in w $.
   Hence,
   by Lemma \ref{truthlemma},
    there exists  $w \in W^c$
   such that $M^c, w \models  \varphi$.
   Since, by Proposition \ref{goodcon2}, $M^c$
   is a quasi-NDM,
   we can conclude that $\varphi$
   is satisfiable for the class of quasi-NDMs.
\end{proof}

\paragraph{Soundness and completeness for NDMs and CMABs}
We can
state the two main results
of this section.
The first is about soundness and completeness
for the class of NDMs.
\begin{theorem}\label{completeness}
The logic $\logic$ is sound and complete for the class of NDMs.
\end{theorem}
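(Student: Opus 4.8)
The plan is to derive both halves of the statement by composing results already established in Sections~\ref{finitemodelproperty} and~\ref{Axiomatics}; I would not build any new model.

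For soundness, I would first observe that every NDM is a quasi-NDM, since Condition~C1 strengthens Condition~C1$^*$ while Condition~C2 is common to both. Consequently any formula valid for the class of quasi-NDMs is valid for the sub-class of NDMs. As Theorem~\ref{complete1} already records that every axiom of $\logic$ is valid for the class of quasi-NDMs and that the rule~(\ref{ax:NecK}) preserves quasi-NDM validity, soundness of $\logic$ for NDMs is immediate. (One could equally re-check directly that \ref{ax:KK}, \ref{ax:KD}, \ref{ax:Int1}, and \ref{ax:NecK} hold in an arbitrary NDM, invoking C1 for \ref{ax:Int1} and C2 for \ref{ax:KD}.)

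For completeness, I would argue contrapositively. Let $\varphi$ be $\logic$-consistent. By Theorem~\ref{complete1}, $\varphi$ is satisfiable for the class of quasi-NDMs; by Theorem~\ref{finitemodel}, it is then satisfiable for the class of finite quasi-NDMs; by Theorem~\ref{eqsemantics}, it is satisfiable for the class of finite NDMs; and, since every finite NDM is an NDM, $\varphi$ is satisfiable for the class of NDMs. Hence every formula valid for the class of NDMs is $\logic$-derivable, which is completeness.

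Since the argument is a chaining of prior theorems, I do not expect a genuine obstacle; the only care needed is bookkeeping. The equivalence theorems of Section~\ref{finitemodelproperty} are phrased via \emph{satisfiability}, so I would move to the contrapositive when reading off completeness, and I would keep the class inclusions oriented correctly --- finite quasi-NDM $\subseteq$ quasi-NDM, finite NDM $\subseteq$ NDM, and NDM $\subseteq$ quasi-NDM --- so that each implication in the chain is applied in its valid direction.
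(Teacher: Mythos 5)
Your proposal is correct and follows essentially the same route as the paper: the paper likewise dispatches soundness as a routine check and obtains completeness by chaining Theorem~\ref{complete1} (quasi-NDMs), Theorem~\ref{finitemodel} (finite quasi-NDMs), and Theorem~\ref{eqsemantics} (finite NDMs), then using that every finite NDM is an NDM. Your added remark that soundness for NDMs can be read off from quasi-NDM soundness via the inclusion NDM $\subseteq$ quasi-NDM is a harmless, slightly more explicit justification of the paper's ``routine exercise'' step.
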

\begin{proof}
It is routine exercise to verify that $\logic $
is sound for the class of NDMs.
Now, suppose that formula $\varphi$
is $\logic$-consistent. Then, by Theorems \ref{complete1} and \ref{finitemodel},  it is satisfiable for the class of  finite quasi-NDMs.
Hence, by  Theorem \ref{eqsemantics}, it is satisfiable for the class of  finite NDMs.
Thus, more generally, $\varphi$ is  satisfiable for the class of  NDMs.
\end{proof}

The second is about soundness and completeness
for the class of CMABs.
\begin{theorem}\label{completeness2}
The logic $\logic$ is sound and complete for the class of CMABs.
\end{theorem}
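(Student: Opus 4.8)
The plan is to derive this result directly from Theorem~\ref{completeness} (soundness and completeness of $\logic$ for NDMs) together with the semantic equivalence of Theorem~\ref{eqsemantics3} (a formula is satisfiable for the class of CMABs iff it is satisfiable for the class of NDMs). No fresh model construction is needed; the substantive work has already been carried out inside the proof of Theorem~\ref{eqsemantics3}.

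For soundness, suppose $\vdash_\logic \varphi$. By the soundness half of Theorem~\ref{completeness}, $\varphi$ is valid for the class of NDMs, so $\neg\varphi$ is not satisfiable for NDMs; applying Theorem~\ref{eqsemantics3} to $\neg\varphi$, it is not satisfiable for CMABs either, i.e.\ $\varphi$ is valid for the class of CMABs. (Alternatively one could check soundness directly: each of \ref{ax:KK}, \ref{ax:KD}, \ref{ax:Int1} is valid over CMABs and \ref{ax:NecK} preserves CMAB-validity, the consistency clause of Definition~\ref{CMAM} being precisely what makes the set of doxastic alternatives within the context nonempty, thereby validating D$_{\impbel{i}}$.) For completeness, suppose $\varphi$ is $\logic$-consistent. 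By Theorem~\ref{completeness} it is satisfiable for the class of NDMs, and hence by Theorem~\ref{eqsemantics3} it is satisfiable for the class of CMABs; contraposing yields that CMAB-validity implies $\logic$-derivability.

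The only point requiring care is bookkeeping: since Theorem~\ref{eqsemantics3} is stated in terms of satisfiability, the soundness direction must be routed through the negation $\neg\varphi$ rather than applied to $\varphi$ directly. There is no genuine obstacle beyond this — the real content lives in Theorem~\ref{eqsemantics3}, where one must verify that the NDM built from a CMAB satisfies Conditions C1 and C2 and, conversely, that the belief bases $B^u$ extracted from a non-redundant NDM together with the context $\iconstraint=\{B^u \suchthat u \in W'\}$ form CMABs, the consistency clause of Definition~\ref{CMAM} corresponding exactly to Condition C2 transported back through the correspondence $u \mapsto B^u$.
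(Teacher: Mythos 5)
Your proof is correct and follows exactly the paper's route: the paper's own (sketched) argument likewise obtains Theorem~\ref{completeness2} by combining Theorem~\ref{completeness} with the satisfiability equivalence of Theorem~\ref{eqsemantics3}. Your added care about routing the soundness direction through $\neg\varphi$ is a sensible explicit filling-in of the same argument.
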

\begin{proofsketch}
The theorem is provable by means of Theorem \ref{completeness} and Theorem \ref{eqsemantics3}.
\end{proofsketch}

\paragraph{Decidability}
The second main result of this section is
decidability of $\logic$.
\begin{theorem}\label{decid1}
The satisfiability problem of $\logic$
is decidable.
\end{theorem}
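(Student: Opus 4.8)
The plan is to combine the completeness of $\logic$ for the class of quasi-NDMs (Theorem \ref{complete1}) with the finite model construction of Section \ref{finitemodelproperty} in order to obtain an effective bound on the size of the structures that need to be inspected. First I would observe that, by Theorem \ref{complete1}, a formula $\varphi \in \lang$ is $\logic$-consistent if and only if it is satisfiable for the class of quasi-NDMs, and hence, by Theorem \ref{finitemodel}, if and only if it is satisfiable for the class of \emph{finite} quasi-NDMs. So $\logic$-satisfiability coincides with satisfiability over finite quasi-NDMs, and it suffices to decide the latter.

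Next I would make the finite model property effective by inspecting the filtration preceding Lemma \ref{theoremFiltration}. If $(M,w)\models\varphi$, then $(M_{\mathit{sub}(\varphi)}, |w|_{\mathit{sub}(\varphi)})\models\varphi$ and, by Proposition \ref{goodcon}, $M_{\mathit{sub}(\varphi)}$ is a finite quasi-NDM. Moreover, by construction, its set of worlds has cardinality at most $2^{|\mathit{sub}(\varphi)|}$; each belief base $\belbase_{\mathit{sub}(\varphi)}(i,|w|_{\mathit{sub}(\varphi)})$ is a subset of the fixed finite set $\mathit{sub}(\varphi)\cap\langminus_\logic$; and the filtrated valuation assigns $\emptyset$ to every atom outside $\ATM(\varphi)$. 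Hence $\varphi$ is $\logic$-satisfiable if and only if it is satisfied at some world of some quasi-NDM $(W,\belbase,\doxset,\valfunct)$ with $|W|\le 2^{|\mathit{sub}(\varphi)|}$, with $\belbase(i,w)\subseteq\mathit{sub}(\varphi)$ for all $i\in\AGT$ and $w\in W$, and with $\valfunct(p)=\emptyset$ for all $p\notin\ATM(\varphi)$.

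Finally, I would note that, up to renaming of worlds, there are only finitely many such structures, that each of them is a finite object, and that all the relevant conditions (finiteness, (C1$^*$), (C2)) as well as the satisfaction relation $(M,w)\models\varphi$ are decidable properties of a finite quasi-NDM, since the truth clauses of Definition \ref{modeldef} quantify only over the finite sets $W$, $\belbase(i,w)$ and $\doxset(i,w)$. A decision procedure therefore enumerates all candidate structures in the bounded class above and checks whether $\varphi$ holds at one of their worlds; it terminates, and its correctness is exactly the equivalence established in the previous two steps.

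The main obstacle is really bookkeeping rather than a genuine difficulty: one must check that the filtrated model can be taken inside the syntactically bounded class just described (finite belief bases contained in $\mathit{sub}(\varphi)$, trivial valuation outside $\ATM(\varphi)$) while still being a quasi-NDM, which is precisely what the definitions of $\belbase_\Sigma$, $\valfunct_\Sigma$, $\doxset_\Sigma$ together with Proposition \ref{goodcon} guarantee. An essentially equivalent alternative route would be to argue that the set of $\logic$-theorems is recursively enumerable from the axiomatization (via Lemma \ref{lindenb}), while the set of $\logic$-refutable formulas is recursively enumerable by searching through finite quasi-NDMs (using Theorems \ref{complete1} and \ref{finitemodel}), so that $\logic$-provability, and hence satisfiability, is decidable.
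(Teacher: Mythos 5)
Your argument is correct and follows essentially the same route as the paper: completeness for quasi-NDMs (Theorem \ref{complete1}) plus the filtration behind Theorem \ref{finitemodel} yields an effectively bounded class of finite quasi-NDMs (at most $2^{|\mathit{sub}(\varphi)|}$ worlds, belief bases inside $\mathit{sub}(\varphi)$, trivial valuation outside $\ATM(\varphi)$), which is then searched exhaustively, all checks being decidable on finite structures. The only, harmless, difference is that you enumerate the bounded quasi-NDMs directly, whereas the paper first converts the bounded quasi-NDM into a bounded NDM via the construction of Theorem \ref{eqsemantics} (introducing a bounded set $X$ of fresh atoms) and enumerates NDMs; by the equivalence theorems both searches decide the same satisfiability problem.
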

\begin{proof}
Suppose $\varphi$ is satisfiable
for the class of NDMs.
Thus, by Theorem \ref{completeness}, it is $\logic$-consistent.
Hence, by Theorem \ref{complete1}, it is satisfiable for the class of quasi-NDMs.
From the proof of Theorem \ref{finitemodel},  we can observe that if $\varphi$
is satisfiable for the class of quasi-NDMs then there exists a quasi-NDM
 satisfying $\varphi$
such that  (i) its set of worlds contains at most $2^n$
elements, (ii) the atomic propositions outside $\PROP(\mathit{sub}(\varphi))$ are false
everywhere in the model, and
 (iii) the belief
base of an agent at a world
contains only formulas from $\mathit{sub}(\varphi)$, where $n$ is the size of
$\mathit{sub}(\varphi)$.
The construction in the proof of Theorem \ref{eqsemantics} ensures
that from this finite quasi-NDM, we can built
a finite NDM satisfying $\varphi$ for
which (i) holds and such that
(iv) the atomic propositions outside $\PROP(\mathit{sub}(\varphi)) \cup X$ are false
everywhere in the model, and (v) the belief
base of an agent at a world
contains only formulas from $\mathit{sub}(\varphi) \cup X$,
where  $X$
is an arbitrary set of atoms from $\PROP \setminus ( \PROP (\varphi) ) $
of size at most $2^n \times |\AGT|  $.
Thus,
in order to verify whether
$\varphi$ is satisfiable, we fix a $X$
and check satisfiability of $\varphi$
for all NDMs
satisfying (i), (iv) and (v).   There are finitely many
NDMs of this kind.
\end{proof}

\section{Relationship with logic of general awareness }\label{complexity}

\newcommand{  \logicaw}{\ensuremath{\textsf{LGA} }\xspace}

\newcommand{\expk}[1] {\mathtt{X}_{#1}   }
\newcommand{\awareop}[1] {\mathtt{A}_{#1}   }
\newcommand{\knowop}[1] {\mathtt{B}_{#1}  }

\newcommand{\knowoper}[1] {\mathtt{B}_{#1}  }
\newcommand{\awarefunct} {\mathcal{A}  }
\newcommand{\belrel}{\mathcal{R}  }

This section is devoted to explore the connection
between the logic $\logic$
and the logic of general awareness by  Fagin \& Halpern  (F\&H) \cite{Fag87}.
In particular, we will provide a polynomial embedding of the former
into latter and, thanks to this embedding, we will
be able to state a complexity
result
for the satisfiability
problem
of $\logic$.

The language of F\&H's logic
 of general awareness $\logicaw$, denoted
 by $\lang_\logicaw $,
 is defined by the following grammar:
 \begin{center}\begin{tabular}{lcl}
  $\varphi$  & $\bnf$ & $p  \mid \neg\varphi \mid \varphi_1 \wedge \varphi_2  \mid
  \knowop{i} \varphi \mid   \awareop{i} \varphi
               \mid       \expk{i } \varphi   $
\end{tabular}\end{center}
where $p$ ranges over $\PROP$
and $i$ ranges over $\AGT$.

The formula  $\awareop{i} \varphi$
has to be read ``agent $i$ is aware of $\varphi$''.
The operators $ \knowop{i}  $
and $\expk{i }  $ have the same interpretations
as the $\logic$ operators $\impbel{i}$
and $\expbel{i}$. Specifically,
$  \knowoper{i} \varphi$
has to be read
``agent $i$ has an implicit belief that $\varphi$ is true'', while
$  \expk{i} \varphi$
has to be read
``agent $i$ has an explicit belief that $\varphi$ is true''.

The previous language is interpreted with respect
to so-called awareness structures, that is,
tuples of the form $M = (S, \belrel_1, \ldots, \belrel_n, \awarefunct_1, \ldots, \awarefunct_n, \pi  )$
where every $\belrel_i \subseteq S \times S$
is a doxastic accessibility relation, every $\awarefunct_i : S \longrightarrow 2^{\lang_\logic }$
is an awareness function and $\pi : \ATM \longrightarrow 2^S$ is a valuation function for atomic
propositions. In order to relate $\logicaw$
with $\logic$,
we here assume that every relation $\belrel_i$
is serial to guarantee that an agent cannot have inconsistent implicit beliefs.

In $\logicaw$, the satisfaction relation
is between formulas
and pointed models $(M,s)$
where
$M = (S, \belrel_1, \ldots, \belrel_n, \awarefunct_1, \ldots, \awarefunct_n, \pi  )$
is an awareness structure and
$s \in S$ is a state:
\begin{eqnarray*}
(M,s) \models p & \Longleftrightarrow & p \in \pi (s) \\
(M,s)\models \neg \varphi & \Longleftrightarrow &    (M,s) \not \models  \varphi \\
(M,s) \models \varphi_1 \wedge \varphi_2 & \Longleftrightarrow &    (M,s)  \models \varphi_1  \text{ and }    (M,s) \models \varphi_2 \\
(M,s) \models   \knowop{i} \varphi    & \Longleftrightarrow & \forall s' \in \belrel_i(s) : (M,s') \models \varphi\\
(M,s) \models   \awareop{i} \varphi    & \Longleftrightarrow & \varphi \in \awarefunct_i(s) \\
(M,s) \models   \expk{i } \varphi     & \Longleftrightarrow & (M,s) \models \knowop{i} \varphi \text{ and } (M,s) \models \awareop{i} \varphi
\end{eqnarray*}

There are two
important differences between $\logicaw$
and $\logic$. First of all, in the semantics of $\logicaw$ the notion of doxastic alternative
is given as a primitive while
in the semantics of  $\logic$
it is defined from the concept of belief base. Secondly, the $\logicaw $
  ontology of epistemic attitudes   is richer
than the $\logic $ ontology,
as the former includes
the concept
of awareness which is not included in the latter.
We believe these are virtues of $\logic$
compared to $\logicaw$.
On the one hand,
the $\logic $
semantics offers a compact representation
of epistemic states in which the concept of belief base
plays a central role.
This
  conforms with how
epistemic states
are traditionally represented in the area
of knowledge representation and reasoning (KR).
On the other hand, modeling  explicit and implicit beliefs
without invoking the notion of awareness is a good thing,
as the latter  is intrinsically polysemic and ambiguous.
This aspect is emphasized by
 F\&H, according to whom the notion
 of awareness is  ``...open to a number of interpretations. One of them
is that an agent is aware of a formula
if he can compute whether or not it is true in a given situation within a certain time or space bound'' \cite[p. \!41]{Fag87}.

Let us define the following direct translation $\mathit{tr}$
from the $\logic$ language to the $\logicaw$
language:
\begin{align*}
\mathit{tr}(p) & = p   \text{ \ for } p \in \ATM \\
\mathit{tr}(\neg \alpha) & = \neg \mathit{tr}(\alpha) \\
\mathit{tr}( \alpha_1 \wedge \alpha_2) & =  \mathit{tr}(\alpha_1) \wedge \mathit{tr}(\alpha_2)\\
\mathit{tr}(\neg \varphi) & = \neg \mathit{tr}(\varphi) \\
\mathit{tr}( \varphi_1 \wedge \varphi_2) & =  \mathit{tr}(\varphi_1) \wedge \mathit{tr}(\varphi_2)\\
\mathit{tr}(\expbel{i} \alpha) & = \expk{i }\mathit{tr}(\alpha)  \\
\mathit{tr}(\impbel{i} \varphi) & = \knowop{i}\mathit{tr}(\varphi)
\end{align*}

As the following theorem highlights the previous
translation
provides a correct embedding
of $\logic$
into $\logicaw$.

\begin{theorem}\label{embedding}
Let $\varphi \in \lang_\logic$. Then, $\varphi $
is satisfiable for the class of quasi-NDMs if and only if $\mathit{tr}(\varphi)$
is satisfiable for the class of awareness structures.
\end{theorem}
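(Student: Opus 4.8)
The plan is to establish both directions of the equivalence by a direct model transformation, in each case keeping the carrier set and the valuation unchanged, identifying the doxastic accessibility relation $\belrel_i$ with the notional function $\doxset(i,\cdot)$, and using the awareness function to record, up to the translation $\mathit{tr}$, exactly the content of each agent's belief base. Once the transformed structure is in place, a routine induction on the structure of $\varphi$ (equivalently, on $\mathit{tr}(\varphi)$, since $\mathit{tr}$ preserves Boolean structure and is injective) shows that truth is preserved by $\mathit{tr}$, which immediately yields both halves of the biconditional.

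For the left-to-right direction, let $M = (W,\belbase,\doxset,\valfunct)$ be a quasi-NDM with $(M,w)\models\varphi$, and define the awareness structure $M' = (W,\belrel_1,\ldots,\belrel_n,\awarefunct_1,\ldots,\awarefunct_n,\pi)$ by $\belrel_i(v) = \doxset(i,v)$, $\awarefunct_i(v) = \{\mathit{tr}(\alpha) \suchthat \alpha\in\belbase(i,v)\}$, and $\pi = \valfunct$. Each $\belrel_i$ is serial by Condition C2. I would then prove by induction that $(M,v)\models\psi$ iff $(M',v)\models\mathit{tr}(\psi)$ for every $v\in W$; the atomic, Boolean, and $\impbel{i}$ cases are immediate (the last from $\belrel_i=\doxset(i,\cdot)$ and the induction hypothesis). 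For $\psi = \expbel{i}\alpha$: if $\alpha\in\belbase(i,v)$ then $\mathit{tr}(\alpha)\in\awarefunct_i(v)$ and, by Condition C1$^*$ applied to $M$ together with the induction hypothesis on $\alpha$, every world in $\belrel_i(v)$ satisfies $\mathit{tr}(\alpha)$, so $(M',v)\models\expk{i}\mathit{tr}(\alpha)$; conversely $(M',v)\models\expk{i}\mathit{tr}(\alpha)$ forces $\mathit{tr}(\alpha)\in\awarefunct_i(v)$, and injectivity of $\mathit{tr}$ on $\langminus_\logic$ yields $\alpha\in\belbase(i,v)$.

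For the right-to-left direction, let $M = (S,\belrel_1,\ldots,\belrel_n,\awarefunct_1,\ldots,\awarefunct_n,\pi)$ be a serial awareness structure with $(M,s)\models\mathit{tr}(\varphi)$, and define $M' = (S,\belbase,\doxset,\valfunct)$ by $\doxset(i,v) = \belrel_i(v)$, $\valfunct = \pi$, and $\belbase(i,v) = \{\alpha\in\langminus_\logic \suchthat (M,v)\models\expk{i}\mathit{tr}(\alpha)\}$. First I would check that $M'$ is a quasi-NDM: Condition C2 holds since each $\belrel_i$ is serial; for Condition C1$^*$, observe that the equivalence $(M,v)\models\mathit{tr}(\alpha)$ iff $(M',v)\models\alpha$ for $\alpha\in\langminus_\logic$ is immediate from the definition of $\belbase$ (its only nontrivial case, $\alpha=\expbel{i}\beta$, is handled directly, without appeal to C1$^*$), and then $\alpha\in\belbase(i,v)$ entails $(M,v)\models\knowop{i}\mathit{tr}(\alpha)$, so every $u\in\belrel_i(v)=\doxset(i,v)$ satisfies $\alpha$ in $M'$, i.e.\ $\doxset(i,v)\subseteq ||\alpha||_{M'}$. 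The full truth lemma $(M,v)\models\mathit{tr}(\psi)$ iff $(M',v)\models\psi$ for $\psi\in\lang_\logic$ then follows by the same induction as in the other direction (the $\expbel{i}$ case now holding by definition of $\belbase$, the $\impbel{i}$ case by $\doxset=\belrel$), and gives $(M',s)\models\varphi$.

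The step I expect to require the most care is the verification of Condition C1$^*$ in the right-to-left direction, and it is precisely there that targeting quasi-NDMs rather than NDMs is essential: in an arbitrary awareness structure the relations $\belrel_i$ and the functions $\awarefunct_i$ are independent, so $\knowop{i}\psi$ may hold for far more formulas $\psi$ than those forced by agent $i$'s explicit beliefs, and hence one can only guarantee the inclusion $\doxset(i,v)\subseteq\bigcap_{\alpha\in\belbase(i,v)}||\alpha||_{M'}$, never the equality required by Condition C1. The remaining points are minor bookkeeping: the injectivity of $\mathit{tr}$, and sequencing the two inductions so that the ``$M'$ is a quasi-NDM'' check relies only on the $\langminus_\logic$-fragment of the correspondence, which does not itself use C1$^*$, thereby avoiding any circularity.
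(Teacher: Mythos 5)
Your proposal is correct and follows essentially the same route as the paper: identical model transformations in both directions (awareness sets as the $\mathit{tr}$-image of belief bases, and belief bases as the formulas whose translations are explicitly believed), the same preliminary induction on the $\langminus_\logic$-fragment to verify Condition C1$^*$, and the same overall truth-preservation induction. Your explicit appeal to the injectivity of $\mathit{tr}$ and the remark on why only C1$^*$ (not C1) can be guaranteed are just careful spellings-out of points the paper leaves implicit.
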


\begin{proof}
We first prove the left-to-right direction.
Let $ M = (W , \belbase, \doxset, \valfunct )$
be a quasi-NDM
and let $w \in W$
such that $(M,w) \models \varphi$.
We  build
the corresponding structure
$M'= (S, \belrel_1, \ldots, \belrel_n, \awarefunct_1, \ldots, \awarefunct_n, \pi  )$
as follows:
\begin{itemize}
\item $S = W$,
\item for every $i \in \AGT$ and for every $w \in W$, $  \belrel_i  (w) = \doxset(i,w) $,
\item for every $i \in \AGT$ and for every $w \in W$, $  \awarefunct_i  (w) =
 \{   \varphi \in \lang_\logicaw \suchthat  \exists \alpha \in  \belbase(i,w) \text{ such that } \varphi = \mathit{tr}(\alpha)\} $,
\item for every  $w \in W$, $\pi(w) =   \valfunct (w)$.
\end{itemize}
It is easy to verify that $M'$ is an awareness structure as every relation
$ \belrel_i$ is serial.

By induction on the structure of $\varphi$,
we prove that
 for all $w \in W$,
  ``$(M, w) \models \varphi$
  iff $(M',w) \models   \mathit{tr}(\varphi)$''.

The case $\varphi = p $ and the boolean cases $\varphi = \neg \psi$
and $\varphi = \psi_1 \wedge \psi_2$
are clear.
Let us now consider the case $\varphi = \expbel{i }\alpha$.

($\Rightarrow$)
$(M, w) \models \expbel{i }\alpha$
means that $\alpha \in  \belbase(i,w)$. By definition of $\awarefunct_i $,
the latter implies that $\mathit{tr}(\alpha) \in   \awarefunct_i  (w) $
which is equivalent to $(M',w) \models  \awareop{i} \mathit{tr}(\alpha)   $.
Moreover, $(M, w) \models \expbel{i }\alpha$ implies that
$ \doxset(i,w) \subseteq || \alpha ||_M $.
By induction hypothesis, we have
$|| \alpha ||_M = || \mathit{tr}(\alpha)  ||_{M'} $.
Thus, by definition of $\belrel_i  (w)$,
it follows that $\belrel_i  (w)  \subseteq   || \mathit{tr}(\alpha)  ||_{M'}$.
The latter means that $(M',w) \models   \knowop{i} \mathit{tr}(\alpha)   $.
From the latter and
$(M',w) \models  \awareop{i} \mathit{tr}(\alpha)   $,
it follows that $(M',w) \models   \expk{i } \mathit{tr}(\alpha)   $.

($\Leftarrow$)
 $(M',w) \models   \expk{i } \mathit{tr}(\alpha)   $
implies  $(M',w) \models    \awareop{i} \mathit{tr}(\alpha)   $
which is equivalent to $\mathit{tr}(\alpha) \in   \awarefunct_i  (w) $.
By definition of $  \awarefunct_i$,
the latter implies
 $\alpha \in    \belbase(i,w) $
 which is equivalent to
  $(M,w) \models    \expbel{i} \alpha   $.

  Finally, let us consider the case $\varphi = \impbel{i }\psi$.
  By induction hypothesis, we have
  $|| \psi ||_M = || \mathit{tr}(\psi)  ||_{M'} $.
    $(M, w) \models \impbel{i }\psi$ means that
  $\doxset(i,w) \subseteq || \psi ||_M$.
  By definition of $\belrel_i (w) $ and   $|| \psi ||_M = || \mathit{tr}(\psi)  ||_{M'} $,
  the latter is equivalent
  to $\belrel_i  (w)  \subseteq  || \mathit{tr}(\psi)  ||_{M'}$
  which in turn is equivalent to $(M', w) \models \knowop{i} \mathit{tr}(\psi)$.

Let us prove the right-to-left direction.
Let
$M= (S, \belrel_1, \ldots, \belrel_n, \awarefunct_1, \ldots, \awarefunct_n, \pi  )$
be an awareness structure.
We build the model
$ M'= (W , \belbase, \doxset, \valfunct )$
as follows:
\begin{itemize}
\item $W = S$,

\item for every $i \in \AGT$ and for every $s \in S$, $   \doxset(i,s) =  \belrel_i(s)  $,

\item for every $i \in \AGT$ and for every $s \in S$, $  \belbase(i,s) = \{ \alpha \in \lang_\logic \suchthat
 \belrel_i(s) \subseteq || \mathit{tr}(\alpha) ||_M \text{ and } \mathit{tr}(\alpha) \in \awarefunct_i(s)
 \} $,

\item for every  $s \in S$, $   \valfunct (s)= \pi(s)$.
\end{itemize}
Let us prove that $M'$
is a quasi-NDM.
It clearly satisfies Condition C2 in Definition \ref{modeldef}.
In order to prove that it satisfies Condition C1$^*$  in Definition \ref{QNDM}, we first prove
by induction on the structure of $\alpha$
 that
  $|| \mathit{tr}(\alpha)   ||_{M}  = || \alpha  ||_{M'} $.
  The case $\alpha= p$
  is clear as well as the boolean
  cases.
Let us prove the case $\alpha= \expbel{i}\alpha'$.
We have $M', s \models \expbel{i}\alpha'$ iff $\alpha ' \in  \belbase(i,s) $.
By definition of $\belbase(i,s)$,
we have $\alpha ' \in  \belbase(i,s) $ iff
$\belrel_i(s) \subseteq || \mathit{tr}(\alpha') ||_M$
and $ \mathit{tr}(\alpha') \in \awarefunct_i(s) $.
The latter is equivalent to
 $M, s \models \expk{i }\mathit{tr}(\alpha')$.

Suppose that $\alpha \in \belbase(i,s)$.
By definition of $ \belbase(i,s)$,
it follows that $ \belrel_i(s) \subseteq || \mathit{tr}(\alpha) ||_M$.
Thus,
since   $|| \mathit{tr}(\alpha)   ||_{M}  = || \alpha  ||_{M'} $,
we have
$ \belrel_i(s) \subseteq  || \alpha  ||_{M'} $.
Hence, by definition of $\doxset(i,s)$,
$ \doxset(i,s) \subseteq  || \alpha  ||_{M'} $.
This shows that $M'$ satisfies Condition C1$^*$.

In the rest of the proof we show that
 for all $s \in S$,
  ``$(M, s) \models \mathit{tr}(\varphi)$
  iff $(M',s) \models   \varphi$''.
  The proof is by induction on the structure of $\varphi$.

The case $\varphi = p $ and the boolean cases
are clear.
Let us now consider the case $\varphi = \expbel{i }\alpha$.
$(M, w) \models  \mathit{tr}( \expbel{i }\alpha)$
means that
$(M, w) \models   \expk{i }\mathit{tr}(\alpha) $.
The latter is equivalent to $\belrel_i(s) \subseteq || \mathit{tr}(\alpha) ||_M$
and $ \mathit{tr}(\alpha) \in \awarefunct_i(s) $.
By definition of $    \belbase(i,s)$,
the latter is equivalent to
$\alpha \in   \belbase(i,s)$
which in turn is equivalent to $(M', w) \models \expbel{i }\alpha $.

Let us finally consider the case $\varphi = \impbel{i }\psi$.
  By induction hypothesis, we have
  $|| \mathit{tr}(\psi) ||_M = || \psi  ||_{M'} $.
    $(M, w) \models  \mathit{tr}(\impbel{i }\psi)$ means that
     $(M, w) \models    \knowop{i}\mathit{tr}(\psi)  $
     which in turn means that $\belrel_i (w)  \subseteq || \mathit{tr}(\psi)  ||_M $.
  By definition of $\doxset(i,w)$ and   $|| \psi ||_M = || \mathit{tr}(\psi)  ||_{M'} $,
  the latter is equivalent
  to $\doxset(i,w)  \subseteq  || \psi  ||_{M'}$
  which in turn is equivalent to $(M', w) \models  \impbel{i }\psi$.
\end{proof}

The following theorem is a direct consequence of 
Theorems \ref{finitemodel}, \ref{eqsemantics}, \ref{eqsemantics3} 	and \ref{embedding}.
\begin{theorem}
Let $\varphi \in \lang_\logic$. Then, $\varphi $
is satisfiable for the class of CMABs  if and only if $\mathit{tr}(\varphi)$
is satisfiable for the class of awareness structures.
\end{theorem}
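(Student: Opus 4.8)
The plan is to chain together the semantic equivalences already established, combined with the embedding of Theorem~\ref{embedding}. The target biconditional relates satisfiability over CMABs with satisfiability of the translated formula over awareness structures; Theorem~\ref{embedding} already bridges satisfiability over quasi-NDMs with satisfiability of $\mathit{tr}(\varphi)$ over awareness structures, so all that remains is to connect satisfiability over CMABs with satisfiability over quasi-NDMs.

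First I would observe that satisfiability over NDMs and satisfiability over quasi-NDMs coincide. One direction is immediate: every NDM is a quasi-NDM, since Condition C1 entails Condition C1$^*$, hence a formula satisfiable over the class of NDMs is satisfiable over the class of quasi-NDMs. For the converse I would run the chain from quasi-NDMs to finite quasi-NDMs (Theorem~\ref{finitemodel}), from finite quasi-NDMs to finite NDMs (Theorem~\ref{eqsemantics}), and from finite NDMs to NDMs (trivial, by inclusion of the finite variant into the full class). Combining this with Theorem~\ref{eqsemantics3}, which equates satisfiability over CMABs with satisfiability over NDMs, yields: $\varphi$ is satisfiable for the class of CMABs iff $\varphi$ is satisfiable for the class of NDMs iff $\varphi$ is satisfiable for the class of quasi-NDMs.

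Finally I would invoke Theorem~\ref{embedding}: $\varphi$ is satisfiable for the class of quasi-NDMs iff $\mathit{tr}(\varphi)$ is satisfiable for the class of awareness structures. Concatenating this equivalence with the one from the previous paragraph gives the desired biconditional. There is no genuine obstacle in this argument; the only point requiring a moment's care is the bookkeeping of the inclusions and equivalences among the several classes of structures (NDMs, quasi-NDMs, their finite variants, and CMABs), so as to be sure that the implication arrows compose into an honest equivalence rather than a one-way implication --- but this is precisely the content summarized in Figure~\ref{fig:equivfigure}, so the write-up can simply cite the relevant theorems in the right order.
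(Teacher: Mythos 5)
Your proposal is correct and follows exactly the paper's route: the paper states the theorem as a direct consequence of Theorems \ref{finitemodel}, \ref{eqsemantics}, \ref{eqsemantics3} and \ref{embedding}, which is precisely the chain of equivalences you compose (CMABs to NDMs, NDMs to quasi-NDMs via the finite variants and the trivial inclusions, then the embedding into awareness structures). You merely spell out the bookkeeping that the paper leaves implicit in Figure \ref{fig:equivfigure}.
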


In \cite{AlechinaAgotnes2},
it is proved that,
for every $X \subseteq \{\text{reflexivity}, \text{transitivity}, \text{Euclideanity} \}$,
the satisfiability problem
for the logic
of general awareness
interpreted over 
awareness structures
whose relations $\belrel_i$
satisfy all properties in $X$
is PSPACE-complete.
To prove PSPACE-membership,
an adaptation of the tableau method for
multi-agent
epistemic logic by \cite{DBLP:journals/ai/HalpernM92} is proposed.
PSPACE-hardness follows from the PSPACE-hardness of multi-agent epistemic logics
proved by \cite{DBLP:journals/ai/HalpernM92}.
It is easy to adapt {\AA}gotnes \& Alechina's method
to show that  the
 logic
of general awareness
interpreted over 
awareness structures
whose relations $\belrel_i$
are serial 
is also PSPACE-complete.
As a consequence, we can prove the following result.

\begin{theorem}
The satisfiability problem of $\logic$
is PSPACE-complete.
\end{theorem}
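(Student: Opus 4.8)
The plan is to read off both bounds from the correspondence with the logic of general awareness that has just been set up. For $\mathsf{PSPACE}$-membership, I would combine Theorem~\ref{embedding} with the equivalence results of Section~\ref{finitemodelproperty} (Theorems~\ref{finitemodel}, \ref{eqsemantics} and \ref{eqsemantics3}): a formula $\varphi \in \lang_\logic$ is satisfiable for the class of CMABs --- equivalently, for NDMs or for quasi-NDMs --- if and only if $\mathit{tr}(\varphi)$ is satisfiable for the class of serial awareness structures. Since $\mathit{tr}$ is defined by a plain structural recursion, it is computable in linear time and $|\mathit{tr}(\varphi)|$ is linear in $|\varphi|$; hence $\mathit{tr}$ is a polynomial many-one reduction from $\logic$-satisfiability to satisfiability in the logic of general awareness over serial awareness structures. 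As recalled just above, the latter problem lies in $\mathsf{PSPACE}$ (by the adaptation of {\AA}gotnes \& Alechina's tableau procedure, which refines the method of \cite{DBLP:journals/ai/HalpernM92}), so $\logic$-satisfiability is in $\mathsf{PSPACE}$ as well.

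For $\mathsf{PSPACE}$-hardness I would exhibit multi-agent doxastic logic $\mathsf{KD}_n$ inside $\logic$. Reading the box $\mathsf{K}_i$ of $\mathsf{KD}_n$ as the operator $\impbel{i}$, every $\mathsf{KD}_n$-formula is (a notational variant of) a formula of $\lang_\logic$ containing no $\expbel{i}$; I claim such a formula $\varphi$ is $\logic$-satisfiable iff it is $\mathsf{KD}_n$-satisfiable, which makes this identification a polynomial reduction. For the forward direction: if $(M,w)\models\varphi$ in an NDM $M=(W,\belbase,\doxset,\valfunct)$, then $(W,(\doxset(i,\cdot))_{i\in\AGT},\valfunct)$ is a serial Kripke model by Condition~C2, and a routine induction --- using that $\varphi$ has no $\expbel{i}$ subformula --- shows $\varphi$ holds there at $w$. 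For the converse I would use the fresh-atom device already employed in the proof of Theorem~\ref{eqsemantics}: given a serial Kripke model $(W,(R_i)_{i\in\AGT},V)$ with $w\models\varphi$ (which may be taken finite, or simply countable, so that enough unused atoms are available since $\PROP$ is infinite), pick for each $i\in\AGT$ and $u\in W$ a fresh atom $p_{i,u}\notin\ATM(\varphi)$, set $\belbase(i,u)=\{p_{i,u}\}$, $\doxset(i,u)=R_i(u)$ and extend the valuation by $\valfunct(p_{i,u})=R_i(u)$; then $||p_{i,u}||_M = R_i(u)$, so Conditions~C1 and~C2 hold and we obtain an NDM, and since the new atoms do not occur in $\varphi$ an easy induction gives $(M,w)\models\varphi$. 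Since $\mathsf{KD}_n$-satisfiability --- indeed already the satisfiability problem of multi-agent epistemic logic --- is $\mathsf{PSPACE}$-hard \cite{DBLP:journals/ai/HalpernM92}, so is $\logic$-satisfiability. Putting the two halves together yields $\mathsf{PSPACE}$-completeness.

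I expect no real obstacle here: the whole argument is assembly plus two straightforward inductions of the kind already carried out in Sections~\ref{finitemodelproperty} and~\ref{Axiomatics}. The one point that needs attention is the \emph{direction} of the reductions --- Theorem~\ref{embedding} translates $\logic$ into awareness logic, which is what the upper bound needs, but the lower bound requires a reduction in the opposite direction, so hardness cannot be read off Theorem~\ref{embedding} alone and is instead obtained by recognising the $\expbel{i}$-free fragment of $\logic$ as multi-agent $\mathsf{KD}$. (Equivalently, one could note that $\mathit{tr}$ maps this fragment bijectively onto the $\{\awareop{i},\expk{i}\}$-free fragment of $\lang_\logicaw$, which is exactly $\mathsf{KD}_n$, and chase Theorem~\ref{embedding} in reverse; the fresh-atom construction above is simply the more self-contained way to justify the same fact.)
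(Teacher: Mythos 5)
Your proof is correct and follows essentially the same route as the paper: PSPACE-membership via the polynomial translation $\mathit{tr}$ into the logic of general awareness over serial awareness structures (using Theorem~\ref{embedding} and the semantic equivalences), and PSPACE-hardness from the Halpern--Moses result \cite{DBLP:journals/ai/HalpernM92}. The only difference is one of detail: the paper's sketch merely cites hardness of multi-agent epistemic logic, whereas your fresh-atom construction embedding $\mathsf{KD}_n$ into the $\expbel{i}$-free fragment of $\logic$ explicitly supplies (correctly) the reduction that the paper leaves implicit.
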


\begin{proofsketch}
PSPACE-membership 
follows from the previous polynomial-time reduction
of $\logic$ satisfiability problem 
to $\logicaw$ satisfiability problem
relative to
serial awareness structures
and the fact that the latter problem is in PSPACE.
PSPACE-hardness follows from the PSPACE-hardness of multi-agent epistemic logics \cite{DBLP:journals/ai/HalpernM92}.
\end{proofsketch}

%
%\section{Modelling theory of mind}\label{tomsection}
%
%\newcommand{\leveltype}[2] {\mathtt{depth}(#1{,}#2) }
%\newcommand{\depth} {\mathit{level} }
%
%We extend the language of $\logic$
%by a new construction $\leveltype{i}{n}$
%
%
%\begin{eqnarray*}
% (B,\iconstraint) \models \leveltype{i}{n} & \Longleftrightarrow & \forall \alpha \in B_i:
% \depth(\alpha) \leq n
%\end{eqnarray*}
%
%
%\begin{align*}
% \depth(p) = & 0 \\
%   \depth(\neg \alpha) = & \depth(\alpha)\\
%  \depth(\alpha_1 \wedge \alpha_2) = & \max\{ \depth(\alpha_1 ),  \depth(\alpha_2)\}\\
%    \depth(\expbel{i} \alpha) = & \depth(\alpha) +1
%\end{align*}
%
%
%Assioma
%
%$$ \big( \leveltype{i}{n} \wedge \neg  \leveltype{i}{n+1} \big) \rightarrow \neg \expbel{i} \alpha \text{ if } \depth(\alpha) > n$$
%
%
%\begin{definition}[Depth-$n$ Doxastic alternatives]
%Let $ B =(\belbaseset_1, \ldots, \belbaseset_n, \state )$ and $ B' = (\belbaseset_1', \ldots,  \belbaseset_n',  \state' )$ be two multi-agent belief bases.
%Then,  $B \relstate{i}^n B'$ if and only if, for every $\alpha \in \belbaseset_i $, $B' \models \alpha$.
%
%
%
%\end{definition}
%$B \relstate{i} B'$ means
%that
%$B'$
%is a doxastic alternative for agent $i$ at $B$
%(i.e., at $B$
%agent $i$ considers $B'$ possible).
%
%\begin{theorem}
%
%Let $ (B,\iconstraint) \models \leveltype{i}{n}$. Then,
%
%\end{theorem}

\section{Related work}\label{relwork}

The present work
lies in the area of logics for non-omniscient agents.
Purely syntactic
approaches
to the logical omniscience problem have been proposed in which
 an agent's beliefs are described either by a set of formulas
 which is
 not necessarily closed under deduction \cite{Eberle,MooreHendrix}
 or by a set of formulas
 obtained by the application
 of an incomplete set of deduction rules \cite{KONOLIGE}.
Logics
of time-bounded reasoning
have also been studied \cite{DBLP:conf/atal/AlechinaLW04,DBLP:journals/aamas/GrantKP00},
in which
reasoning is a represented as a process that requires time
due to the time-consuming application of inference rules.
Finally,
logics of (un)awareness
have been studied  both in AI \cite{Fag87,DBLP:journals/jolli/DitmarschF14,DBLP:journals/jolli/AgotnesA14a}
and economics
 \cite{Modica,Heifetz,DBLP:journals/geb/HalpernR09}.
 
 As we have shown in Section \ref{complexity},
 our logic $\logic$ is closely related to Fagin \& Halpern (F\&H)'s
 logic of general awareness,
 as there exists a polynomial embedding
 of the former into the latter.
Another related system  
 is the logic of local reasoning also presented in \cite{Fag87}
 in which the distinction between explicit and implicit
 beliefs is captured.
 F\&H) use a neighborhood semantics
 for explicit belief:
 every agent is associated with a set of sets
 of worlds, called frames of mind.
 They define an agent's set of
 doxastic alternatives as the intersection
 of the agent's frames of mind.
 According to F\&H's semantics, an agent
  explicitly believes that
 $\varphi$
 if and only if she
 has a frame of mind in which $\varphi$
 is globally true. Moreover, an agent
  implicitly believes
 that $\varphi$
 if and only if, $\varphi$
 is true at all her doxastic alternatives.
% In this sense, F\&H's semantics
% is similar to the semantics
% for $\logic$
% in terms of notional doxastic models (NDM).
In their semantics,
 there is no representation of an agent's belief
 base, corresponding to the set of formulas explicitly believed by the agent.
 Moreover, differently from our notion
 of explicit belief, their notion does not completely solve
 the logical omniscience problem.
 For instance, while their notion of explicit
 belief is closed under logical equivalence,
our notion is not.
Specifically, the following rule of equivalence
preserves validity in F\&H's logic
but not in our logic:
\begin{align*}
\frac{\alpha \leftrightarrow \alpha'}{  \expbel{i} \alpha \leftrightarrow   \expbel{i} \alpha'}
\end{align*}
This is a consequence
of their use of an extensional semantics
for explicit belief.
Levesque too provides an extensional semantics
for explicit belief with no connection
with the notion of belief base  \cite{LevesqueExplicitBel}. In his logic,
explicit beliefs are closed under conjunction,
while they are not in our logic $\logic$.

\section{Conclusion}\label{conclusion}

We
have presented
a logic
of explicit and implicit beliefs
with a semantics
based on belief bases.
In the future,
we plan to study a variant
of this logic
in which explicit and implicit beliefs
are replaced by
\emph{truthful}
explicit and implicit knowledge.
At the semantic level, we will move
from
multi-agent belief
bases
to multi-agent knowledge bases
in which the epistemic accessibility relation $ \relstate{i}$
is assumed to be reflexive. The logic will include the following extra-axiom:
\begin{align}
&  \impbel{i} \varphi \rightarrow \varphi \tagLabel{T$_{\impbel{i}}$}{ax:KT}
\end{align}

We also expect to study a variant of our logic
with the following extra-axioms
of
positive and negative introspection for implicit beliefs:
 \begin{align}
&   \impbel{i} \varphi \rightarrow \impbel{i} \impbel{i} \varphi \tagLabel{4$_{\impbel{i}}$}{ax:K4}\\
&  \neg \impbel{i} \varphi \rightarrow \impbel{i} \neg \impbel{i} \varphi \tagLabel{5$_{\impbel{i}}$}{ax:K5}
\end{align}
Moreover,
we plan to extend
the logic $\logic$
and its epistemic variant
by concepts
of distributed belief and distribute knowledge.

We also plan to study a dynamic extension of the logic 
$\logic$
which is 
in line with existing theories of belief base change.
Specifically, we intend
to capture different forms
of belief base revision
in the multi-agent setting
offered by $\logic$
including 
partial meet base revision
\`a la Hansson  \cite{HanssonPHD}.
The general idea
of partial meet base revision,
inspired by single-agent
partial meet revision
for belief sets   \cite{Alc85},
is that the belief
base resulting from the integration
of an input formula $\alpha$
should be equal to the intersection
of all maximally consistent subsets
of the initial belief base  including $\alpha$.

Last but not least,
we plan to study the model checking problem for our logic
by using
the compact representation offered by multi-agent belief models,
as defined in Definition \ref{MAM}.
As shown by \cite{BenthemEijckGattingerSu2015}, the possibility of using compact models could be beneficial for model checking.

\putaway{

\section{From belief to knowledge }\label{Variants}

In this section, we  present a variant of the logic $\logic$,
called
$\logicepi$ (Logic of Epistemic Attitudes),
in which the concepts
of explicit and implicit belief
are replaced by the concepts of explicit and implicit knowledge. Differently from
explicit and implicit beliefs,
explicit and implicit knowledge are truthful.
%Then, we present a dynamic extension
%of
%$\logicepi$ by the concept of public announcement in the sense of \cite{Plaza}.
%
%%\subsection{Notional Epistemic Logic  }\label{sec:dynExtension}

The language of the logic $\logicepi$ is the same as the language of the logic $\logic$.
At the syntactic level,
the only difference between the two logics is the reading of the operators $\expbel{i}$
and $\impbel{i}$. In  $\logicepi$,
the formula $\expbel{i} \alpha$ has to read ``agent  $i$ explicitly  (or actually)
knows that $\alpha$ is true'',
whereas $ \impbel{i}   \varphi$
has to be read
``agent  $i$ implicitly (or potentially)  knows that $\varphi$ is true''.

In $\logicepi$, formulas of the language $\lang$
are interpreted with respect to the following concept of notional epistemic model.
Truth conditions
of formulas are defined as in Definition  \ref{modeldef}.

\begin{definition}[Notional epistemic model]\label{modeldef2}
A
notional epistemic model (NEM) is a notional doxastic model (NDM) $ M = (W , \awbase, \belbase, \doxset, \valfunct )$,
as defined in Definition \ref{modeldef},
that satisfies the following additional condition,
 for every
 $i \in \AGT$ and for every
$ w \in W$:
 \begin{itemlist}{C5}
\item[(C5)]   $ w\in  \doxset(i,w)$.
\end{itemlist}
 \end{definition}

Quasi-notional epistemic models (quasi-NEMs)
are the epistemic counterpart of quasi-NDMs
of Definition \ref{QNDM}.

\begin{definition}[Quasi-notional epistemic model]\label{QEDM}
A quasi-notional epistemic model (quasi-NEM) is a
quasi-NDM
that satisfies  Condition C5
in Definition \ref{modeldef2}.
 \end{definition}

It
is easy to verify that
quasi-NEMs
and NEMs
satisfy the following Condition,
for every
$\alpha \in \langminus_\logic$,
$i \in \AGT$ and
$ w \in W$:
\begin{itemlist}{C5}
\item[(C6)]   if $  \alpha \in \belbase(i,w)$ then $  w \in  ||\alpha ||_M$.
\end{itemlist}
It is implied by
Condition
C1$^*$ in Definition \ref{QNDM} together with Condition C5
in Definition \ref{modeldef2}.

 As for NDMs and quasi-NDMs,
we say that a NEM/quasi-NEM $ M = (W , \awbase, \belbase, \doxset, \valfunct )$
is finite
if and only if
$W$, $\awbase(i,w)$
and $\belbase (i,w)$
are finite sets
for all $i \in \AGT$
and for all $w\in W$.

Validity and satisfiability
of formulas relative to
(finite)
NEMs and quasi-NEMs
are defined in the usual way.

  It is   easy to adapt the proof of Theorem \ref{eqsemantics},
 to obtain the following result.

 \begin{theorem}\label{eqsemantics2}
Let
 $\varphi \in \lang$.
 Then, $\varphi$
 is satisfiable for  the class
 of finite NEMs if and only if
 $\varphi$
 is satisfiable for the class
 of finite quasi-NEMs.
\end{theorem}

\begin{proofsketch}
 We use the same model construction
 as in the proof of
 Theorem \ref{eqsemantics}.
  In order to show that
  $M'$ is a NEM,
  we only need to prove that
 satisfies the extra Condition C5 in Definition \ref{modeldef2}.
 To show this,
 it is sufficient to observe that  $  \doxset = \doxset '$.
   \end{proofsketch}

 As for the axiomatics of $\logicepi$,
 we just need to add an  extra-axiom
 to the definition of the logic $\logic$.

\begin{definition}
We define $\logicepi $  to be the extension of  $\logic$ given by the following axiom:
\begin{align}
&  \impbel{i} \varphi \rightarrow \varphi \tagLabel{T$_{\impbel{i}}$}{ax:KT}
\end{align}
\end{definition}

   The following is
   the
counterpart
of
Theorem \ref{complete1}
for $\logicepi$.

       \begin{theorem}\label{complete1K}
The logic $\logicepi$ is sound and complete for the class of quasi-NEMs.
\end{theorem}
\begin{proof}[Sketch]
We use the same canonical model
argument as in the proof of Theorem \ref{complete1}.
We only need to check that the canonical model $M^c$
satisfies the extra Condition C5 in Definition \ref{modeldef2}. This is guaranteed by Axiom
(\ref{ax:KT}).
\end{proof}

   The following is
   the
counterpart
of
Theorem \ref{finitemodel}
for quasi-NEMs.

\begin{theorem}\label{finitemodelK}
Let $\varphi \in \lang$.
Then, if $\varphi $
is satisfiable
for the class
of quasi-NEMs
then it
is satisfiable
for the class
of finite quasi-NEMs.
\end{theorem}
\begin{proof}[Sketch]
We use the same filtration
argument as in the proof of Theorem \ref{finitemodel}.
It is easy to verify that the model $M_\Sigma$
satisfies the  extra Condition C5 in Definition \ref{modeldef2},
as we use the smallest filtration
for defining
$\doxset_\Sigma$. The smallest filtration guarantees that
Condition C5 is preserved under the model construction.
\end{proof}

The following theorem
is a direct consequence
of Theorems
\ref{eqsemantics2},
\ref{complete1K} and
 \ref{finitemodelK}.
\begin{theorem}
The logic $\logicepi$ is sound and complete for the class of NEMs.
\end{theorem}

In a way similar to Theorem \ref{decid1}
we can prove the following result.
\begin{theorem}\label{decid2}
The satisfiability problem of $\logicepi$
is decidable.
\end{theorem}

\section{Perspectives}\label{Perspectives}

As we have observed in Section \ref{semanticSect}, although
an agent cannot explicitly
believe a contradiction,
her belief
base may be inconsistent.
This is realistic since a non-omniscient agent
does not necessarily derive all consequences of what she explicitly believes,
thereby being unaware about
the inconsistency of her belief base.
This is the reason why
 the formula $\impbel{i} \bot$
is satisfiable for the class of NDMs.

However,
in certain situations,
it might be useful
to preserve consistency of an agent's belief base.
%For example, the designer of an artificial system such
%as a robot or a conversational agent
%may wish her system
%to have consistent beliefs.
In order to ensure this,
we would need to strengthen Definition \ref{modeldef}
of NDM
by imposing the following extra condition
for all
$ w \in W$:
\begin{itemlist}{C5}
\item[(C7)]   there exists $v \in W$ such that $ v\in   \doxset(i,w) $.
\end{itemlist}
 Condition C7 makes the following formula valid:
 \begin{align}
&  \neg (\impbel{i} \varphi \wedge \impbel{i} \neg \varphi ) \tagLabel{D$_{\impbel{i}}$}{ax:KD}
\end{align}
Note that the previous formula is valid for the class of NEMs
of Definition   \ref{modeldef2}, as Condition C7 is implied by Condition C5.

Another aspect
we have not considered so far is epistemic introspection.
Implicit beliefs
in the logic
$\logic$
and implicit knowledge in the logic
$\logicepi$
are not necessarily introspective.
In order to obtain positive and negative
introspection
for implicit belief
and for implicit knowledge,
we would need to strengthen Definitions \ref{modeldef}
and \ref{modeldef2}
of NDM and NEM
by imposing the following extra conditions
for all $ w,v,u \in W$:
\begin{itemlist}{C5}
\item[(C8)]   if $ v\in   \doxset(i,w) $ and $ u\in   \doxset(i,v) $ then  $ u\in   \doxset(i,w) $,
\item[(C9)]  if $ v\in   \doxset(i,w) $ and $ u\in   \doxset(i,w) $ then  $ u\in   \doxset(i,v) $.
\end{itemlist}

 Conditions C8 and C9 make the following formulas valid:
 \begin{align}
&   \impbel{i} \varphi \rightarrow \impbel{i} \impbel{i} \varphi \tagLabel{4$_{\impbel{i}}$}{ax:K4}\\
&  \neg \impbel{i} \varphi \rightarrow \impbel{i} \neg \impbel{i} \varphi \tagLabel{5$_{\impbel{i}}$}{ax:K5}
\end{align}

It
is easy to verify that
Conditions C8 together
with Condition C1 in Definition \ref{modeldef}
imply
the following condition:
\begin{itemlist}{C5}
\item[(C10)]   if $ \expbel{i} \alpha \in \belbase(i,w)$ and $ v\in   \doxset(i,w) $ then $ \expbel{i} \alpha \in \belbase(i,v)$,
\end{itemlist}
whereas
Conditions C9 together
with Condition C1
imply
the following condition:
\begin{itemlist}{C5}
\item[(C11)]   if $ \neg \expbel{i} \alpha \in \belbase(i,w)$ and $ v\in   \doxset(i,w) $ then $ \neg\expbel{i} \alpha \in \belbase(i,v)$.
\end{itemlist}

 Conditions C10 and C11 make the following formulas valid:
 \begin{align}
&  \expbel{i} \alpha  \rightarrow \impbel{i} \expbel{i} \alpha  \tagLabel{PInt$_{\expbel{i}}$}{ax:PosI}\\
&  \neg \expbel{i} \alpha  \rightarrow \impbel{i} \neg \expbel{i} \alpha  \tagLabel{NInt$_{\expbel{i}}$}{ax:NegI}
\end{align}
 This means that positive and negative introspection
 for explicit belief/knowledge is a consequence of
 positive and negative introspection
 for implicit belief/knowledge.

 In future work,
 we intend
 to provide
 an axiomatics for
 (i) the variant
 of $\logic$ with consistent belief
 bases, positive and negative introspection for implicit beliefs,
 and (ii) the variant
 of $\logicepi$  with positive and negative introspection  for implicit knowledge.
 We conjecture that (i) is completely axiomatized by the axioms
 and rules of inference
 of $\logic $ \emph{plus} the previous
 Axioms (\ref{ax:KD}), (\ref{ax:K4}) and (\ref{ax:K5}),
  while (ii) is completely axiomatized by the axioms
 and rules of inference
 of $\logicepi $ \emph{plus} the previous
 Axioms  (\ref{ax:K4}) and (\ref{ax:K5}).

We also plan to study dynamic extensions
of $\logic$
and $\logicepi$ by the concept of public announcement \cite{Plaza}.
The core idea
 is that a public announcement directly affects an agent's awareness and explicit
beliefs. Given the connection
between an agent's belief base
and her
set of notional worlds, it should indirectly affect the agent's  implicit beliefs.

}

\end{document}